\DeclareRobustCommand\onedot{\futurelet\@let@token\@onedot}
\def\@onedot{\ifx\@let@token.\else.\null\fi\xspace}
\def\algbackskip{\hskip-\ALG@thistlm}
\def\eqref#1{equation~(\ref{#1})}
\def\Eqref#1{Equation~(\ref{#1})}
\def\EqrefNP#1{Equation~\ref{#1}}
\def\1{\bm{1}}
\def\vzero{{\bm{0}}}
\def\vone{{\bm{1}}}
\def\vtheta{{\bm{\theta}}}
\def\va{{\bm{a}}}
\def\vu{{\bm{u}}}
\def\vv{{\bm{v}}}
\def\vx{{\bm{x}}}
\def\vy{{\bm{y}}}
\def\vz{{\bm{z}}}
\def\vdelta{{\bm{\delta}}}
\def\evtheta{{\theta}}
\def\evdelta{{\delta}}
\def\eva{{a}}
\def\evl{{l}}
\def\evy{{y}}
\def\evz{{z}}
\def\mA{{\bm{A}}}
\def\mV{{\bm{V}}}
\def\mW{{\bm{W}}}
\def\mX{{\bm{X}}}
\def\mY{{\bm{Y}}}
\def\mZ{{\bm{Z}}}
\def\mDelta{{\bm{\Delta}}}
\DeclareMathAlphabet{\mathsfit}{\encodingdefault}{\sfdefault}{m}{sl}
\SetMathAlphabet{\mathsfit}{bold}{\encodingdefault}{\sfdefault}{bx}{n}
\def\gM{{\mathcal{M}}}
\def\gS{{\mathcal{S}}}
\def\gT{{\mathcal{T}}}
\def\gV{{\mathcal{V}}}
\def\gX{{\mathcal{X}}}
\def\emW{{W}}
\newcommand{\E}{\mathbb{E}}
\newcommand{\R}{\mathbb{R}}
\newcommand{\lr}{\alpha}
\begin{document}

\title{
Learn Faster and Forget Slower via Fast and Stable Task Adaptation
}

\author{\name Farshid Varno \email f.varno@dal.ca \\
       \addr Department of Computer Science\\
       Dalhousie University\\
       6050 University Ave, Halifax, NS B3H 1W5, Canada
      \AND
      \name Lucas May Petry \email lucas.petry@posgrad.ufsc.br \\
      \addr Programa de Pós-Graduação em Ciência da Computação\\ 
      Universidade Federal de Santa Catarina\\
      Florianópolis, SC, Brazil
      \AND
      \name Lisa Di Jorio \email{lisa@imagia.com} \\
      \addr Imagia Cybernetics Inc. \\
      6650 St Urbain St \#100, Montreal, Quebec H2S 3G9, Canada
      \AND
      \name Stan Matwin \email{stan@cs.dal.ca}\\
      \addr Department of Computer Science\\
      Dalhousie University\\
      6050 University Ave, Halifax, NS B3H 1W5, Canada
       }

\editor{Francis Bach and David Blei
and Bernhard Sch{\"o}lkopf}

\maketitle

\begin{abstract}
Training Deep Neural Networks (DNNs) is still highly time-consuming and compute-intensive. 
It has been shown that adapting a pretrained model may significantly accelerate this process. With a focus on classification, we show that current fine-tuning techniques make the pretrained models catastrophically forget the transferred knowledge even before anything about the new task is learned. Such rapid knowledge loss undermines the merits of transfer learning and may result in a much slower convergence rate compared to when the maximum amount of knowledge is exploited. We investigate the source of this problem from different perspectives and to alleviate it, introduce Fast And Stable Task-adaptation (FAST), an easy to apply fine-tuning algorithm. The paper provides a novel geometric perspective on how the loss landscape of source and target tasks are linked in different transfer learning strategies. We empirically show that compared to prevailing fine-tuning practices, FAST learns the target task faster and forgets the source task slower. 

\end{abstract}

\begin{keywords}
  Transfer Learning, Catastrophic Forgetting, Deep Neural Networks, Classification, Optimization
\end{keywords}

\section{Introduction}
In many real-world applications, learning quickly is as vital as the ultimate performance. However, the speed at which the results are required is often not able to be predetermined. For example, a smartphone user may expect a face or speech recognition tool to be customized and provide reasonable results at anytime after exposure to the data. 
Ideally, a fast converging \textit{anytime algorithm} is needed to progressively learn these kind of tasks. \textit{Anytime algorithms} are expected to improve their performance upon allocating more computation time and data~\citep{korf1998a} and inevitably could be interrupted at anytime depending on how impatient the user or application is~\citep{boddy1989solving}.
Conversely, a typical 
Deep Neural Network (DNN) model with so-called medium capacity may require up to weeks or even months to be trained using thousands of training examples and large amounts of computational power until it exhibits a satisfactory level of performance.\footnote{As of today, a DNN model with tens to hundreds of millions of parameters is known to have medium capacity.}
Moreover, since first-order Gradient Descent (GD)---the preferred means of training DNNs---is not an {anytime algorithm}, guaranteeing fast and progressive generalizable results while training is challenging for DNNs, making them harder to deploy at any time.

A popular trend currently defines the life-cycle of a DNN model to eventually settle in the \textit{Model Deployment} stage and work in that stage indefinitely~\citep{ashmore2019assuring}. With this mindset, although the progression of the performance 
is usually validated using a held-out portion of the training data, different models are typically compared through their final test results. How quickly cutting-edge models reach different performance milestones during training is often overlooked, which leads to unnecessary expenses in addition to large amounts of heat emission~\citep{li2016evaluating} that in turn damages the environment in different ways~\citep{strubell2019energy}.\footnote{As an example of the training cost, \citet{yang2019xlnet} (\textit{XLNet}) reported using a cloud infrastructure for training their model that costs tens of thousands of dollars.} The importance of learning efficiency is even more pronounced when it comes to certain learning settings. For instance, a conventional practice in \textit{neural architecture search} is to find competent architectures based on the performance that each sampled architecture achieves after being trained for a relatively short time ~\citep{wang2019alphax, zela2018towards}. 

Various techniques have been proposed to ease the burden of training DNNs that can potentially offer better performance in shorter time, thus lowering the time requirements and compute cost.
These techniques include initialization~\citep{glorot2010understanding, He_2015_ICCV, arpit2019benefits}, optimization~\citep{qian1999momentum, botev2017nesterov, tieleman2012lecture, kingma2014adam}, and normalization~\citep{ioffe2015batch,ba2016layer,Wu_2018_ECCV, qiao2019weight} methods, architectural modifications~\citep{He_2016_resnet,szegedy2017inception}, and many others \citep{He_2019_CVPR}. 
A well-known approach for accelerating the training process of a DNN on a given task is to fine-tune it after being pretrained on a similar source task with abundant data~\citep{Girshick_2014_CVPR}.

Despite its high necessity, a well-defined fine-tuning procedure that leads to fast and progressively generalizable results is currently missing from the transfer learning literature. A typical practice of adopting a 
pretrained model is to remove some parameters and append new ones such that the model's output matches the dimensions of the target task~\citep{wang2019easy, chu2016best}.\footnote{We use the term \textit{model adoption} (not adaptation) to refer to the process in which the pretrained model is altered such that it becomes prepared for learning a target task.} In this paper, we argue that without special considerations in optimizing a model that is formed in this way, transfer learning may become highly inefficient, though still better than training from scratch. In fact, we show that carelessly optimizing such model can lead to catastrophic forgetting \citep{mccloskey1989catastrophic, goodfellow2013empirical}, even before learning to perform a new task.
Our objective is two-fold:
first, we want to accelerate the adaptation of Convolutional Neural Networks (CNNs) for classification;
and second, we want to maintain a good generalized performance for these models during most of the training, from the beginning until convergence. We achieve these goals mainly by adjusting the magnitude of the gradients that back-propagate through the parameters of the \textit{softmax classifier} toward 
the rest of the modules.

Our main contributions in this paper are identifying a source of inefficiency in the parametric softmax classifier, bringing to light the catastrophic forgetting it causes, and finally tackling it by introducing Fast And Stable Task-adaptation (FAST), a novel optimization procedure for fine-tuning. 
Unlike the prevailing technique of fine-tuning which is unconcerned by initialization of the appended parameters~\citep{Girshick_2014_CVPR, chu2016best, wang2019easy}, our algorithm does not entangle the spaces formed by the pretrained and appended parameters all at once; instead, it bridges them in a gradual and smooth order. At the beginning of the training when even the most-popular advanced-optimization techniques lack a clear perspective, FAST slows down updates on the pretrained parameters and at the same time it quickly adapts the appended ones starting from all zero---the origin of the parameter space they form. By moving away from the origin, the norm of the appended parameters increases. Properly adjusting the rate with which this norm is escalated, leads to updates with efficient magnitude and direction on the pretrained parameters. 
Our experiments suggest that FAST accelerates the convergence of fine-tuning deep pretrained CNNs for classification tasks. 

The rest of the paper is organized as follows. Section \ref{sec:bg} describes our notation and gives further insight about the scope of the paper. In Section~\ref{sec:geometric}, our novel geometric perspective is both introduced and exploited to depict the most common transfer learning
techniques.\footnote{The term~\textit{transfer learning} may represent a broad range of techniques which involve transferring knowledge among tasks; however, unless explicitly indicated, we use it to refer to \textit{classical transfer learning}; the most widely known category of transfer learning in which the parameters that are pretrained on one task are employed to learn another.} 
Our proposed method for fine-tuning is explained in Section~\ref{sec:method} and its relation to other research works is expanded in Section~\ref{sec:relate}. Section~\ref{sec:exp} contains the experiments and results and finally the paper is concluded in Section~\ref{sec:future}, where also future works are discussed.

\section{Background and Notation}
\label{sec:bg}
In this section, our notation is establish and a brief background on parametric classifiers is provided.

\subsection{Notation}
In our setting, a CNN consists of a non-linear feature extractor $\phi: \gX \longrightarrow \R^{D}$ that is followed by a linear label-space mapping,\footnote{Each instance in the extracted feature space can be considered to be 1-dimensional. Practically, this unification in number of dimensions of extracted features is applied in two major ways for image classification, described by~\citep{Lifchitz_2019_CVPR}.} $f: \R^{D} \longrightarrow \R^{N}$ and a probability estimation function $p$.
$\phi$ extracts the features of any input $\vx \in \gX$ in the form of a feature vector such as $\va \in \R^D$.
We assume each true label $\vy$ is one-hot encoded, logistic softmax is used as the probability estimation function, meaning
\begin{equation}
    \label{eq:sm}
    p(\vz) = \frac{e^{\vz}}{e^{\vz}\vone_{N\times N}},
\end{equation}
and the negative log-likelihood is employed as the loss function.\footnote{We could not find any previous work which defines the softmax in a vectorized form as we present in~\Eqref{eq:sm}, so we assume that this notation for softmax is novel.} If the output of $f(\va)$ is $\vz$ and $\hat{\vy} = p(\vz)$, the loss could be written as
\begin{equation}
    \label{eq:loss_objective}
    l = - \vy \ln{(\hat{\vy})}.
\end{equation}
The elements of vectors $\vz \in \R^N$ and $\hat{\vy} \in \R^N$ are called softmax logits and predicted labels, respectively.

In this paper, we use the term \textit{CNN classifier} to refer to a model which contains a convolutional feature extractor, a label-space mapping module, and a probability estimation function. For disambiguation, the term \textit{classifier} will be used to refer to a model with similar components excluding a feature extractor.
The exponential functions and division in \Eqref{eq:sm} apply element-wise on their vector argument and $\vone_{N \times N}$ is an $N \times N$ matrix of all ones.
For the sake of simplicity and consistency, we refer to the mini-batched version of the forwarding data, the backwarding gradients and the labels with capital symbols. For example, $\mA \in \R^{M\times D}$ is the mini-batched version of $\va$ with $M$ rows, each containing feature vectors corresponding to one example.
The elements of {matrices} are addressed by subscripts with the first element indicating the row number. $\mA_m$ and $\mA_{:,d}$ represent the $m$-th row and $d$-th column of $\mA$, respectively.
We also show the sequence of training mini-batches exposed to the model by writing the number of applied gradient updates inside parenthesis in the superscripts. For instance, $\mZ^{(0)}$ refers to the logits corresponding to the first iteration of the training mini-batch. 
In our notation, $||\vu||_v$ and $||\vu||_F$ represent the L$_v$-norm and Frobenius norm of vector $\vu$ respectively; whereas, $|\vu|$ indicates its number of elements or dimensions (e.g. if $\vu \in \R^V$ then $|\vu| = V$).

\subsection{Parametric Classifiers}
The importance of the  classifier's role has been already discussed in DNN literature. However, most of the research works in this realm have only focused on the sort of features that different classifiers encourage the feature extractor to learn~\citep{Luo2018cosine,chen2019a_closer}.  
The randomly-initialized 
parametric softmax classifier is still the preferred classifier of state-of-the-art models in computer vision~\citep{He_2016_resnet, Huang_2017, Ma_2018_ECCV, Sandler_2018_CVPR, Xie_2017_CVPR, tan2019efficientnet}.
It applies a softmax function on a linear mapping $f$, where $f$ is often the dot-product between its inputs (extracted features) and its parameters $\mW \in \R^{N \times D}$, that is 
\begin{equation}
    \label{eq:fc_op}
    f_{\text{dp}}(\va, \mW) = \va \ \mW^T,
\end{equation}
in which the subscript $\text{db}$ stands for \textit{dot-product}.

The first dimension of $\mW$ is the number of classes, so its dimension depends on the performing task. To adapt a pretrained model to a new task, usually a random set of values is drawn for initializing $\mW$ \citep{Girshick_2014_CVPR, agrawal2014analyzing, li2018learning}. Appending a randomly-initialized classifier to a meaningful set of features results in a heterogeneously parametrized model. Training a pretrained feature extractor along with a randomly initialized classifier can contaminate the genuinely learned representations and significantly degrade the maximum transferable knowledge. 

Usually the fine-tuning is slowed down by decreasing the learning rate to compensate for this knowledge leak~\citep{li2018learning} which undermines the applications' potential for early deployment.\footnote{\citet{li2018learning} refer to the knowledge that is leaked from the feature-extractor's parameters at the beginning of the fine-tuning as \textit{drift in the parameters}.} Such a downturn in efficiency is often disregarded, though some attempts have been made to improve the performance through modifying the classifier for certain models, learning scenarios and learning objectives. For instance, ~\cite{Luo2018cosine} proposed some modifications to a \textsc{VGG} model~\citep{Simonyan2014vgg} along with replacing the dot-product in the label-space mapping (see~\EqrefNP{eq:fc_op}) with two candidate operations. Similarly many studies in \textit{few-shot learning} replace the commonly used dot-product operation of the classifier with another bounded operation so called \textit{cosine classifier}~\citep{chen2019a_closer, vinyals2016matching, gidaris2018dynamic}.\footnote{The mapping function of cosine classifier is generally defined as
$     \forall n \in \{1,\dots, N\}: f_{\text{cosine}} (\va, \mW_n) = \gamma( \frac{\va}{{||\va||}_2} \frac{\mW_n^T}{{||\mW_n||}_2})
$,
where $\gamma$ is a constant or learnable scalar.}
While our focus in this paper is only on a typical learning scenario, we look for the source of inefficiency in the initialization of the $\mW$ instead of making changes to the projection operation. We justify our perspective by highlighting the forgetting effect during the fine-tuning of a pretrained model. In our proposed method, the euclidean norms of the feature-to-label-space projection vectors ($||\mW_n||_2$) increase during the course of training. This gradual growth empowers the model to smoothly adapt the pretrained parameters to the new task with a reduced forgetting effect.

From the definition of $f_{db}$ in ~\Eqref{eq:fc_op} and the loss objective noted in \Eqref{eq:loss_objective}, it is easy to show that
\begin{equation}
    \label{eq:grad_aNz}
    \frac{\partial l}{\partial \vz} = - \vdelta = \hat{\vy} - \vy \ ; \ \ \frac{\partial l}{\partial \va} = - \vdelta \ \mW.
\end{equation}
The prediction error (or simply error) $\vdelta$ is non-positive everywhere, except for the index corresponding to the true class. Notice that if the index of the true class is $n$ then $\vy_n=1$.
The updates on $\mW$ are calculated based on the cumulative gradient of all examples in the mini-batch. Therefore, using the batched version of $\va$ and $\vdelta$ we have
\begin{equation*}
    \frac{\partial \ell}{\partial \mW} = - \frac{1}{M}\mDelta^T \mA,
\end{equation*}
in which $\frac{1}{M}$ appears from averaging the loss across the batch dimension. Here, it is assumed that to calculate the mini-batch's loss, the dimension of the loss vector is reduced by averaging---i.e., $\ell =\frac{1}{M} \sum_{m=1}^{M} \evl_m$ where $\evl_m = -\mY_m \ln \hat{\mY}_m$.
For simplicity, let us assume vanilla gradient descent with learning rate $\lr$ is applied, then the $(\tau+1)$-th update for $\mW$ is calculated as
\begin{equation}
\label{eq:recursive_w_fc}
\begin{split}
    \mW^{(\tau+1)} &= \mW^{(\tau)} + \frac{\lr}{M} {\mDelta^T}^{(\tau)} \mA^{(\tau)}\\
    &= \mW^{(0)} + \frac{\lr}{M} \sum_{t=1}^{\tau} {\mDelta^T}^{(t)} \mA^{(t)}
\end{split}
\end{equation}
where $\mW^{(0)}$ denotes the value of $\mW$ before applying any updates. Traditionally, the elements of $\mW^{(0)}$ are randomly selected, yet there have not been an agreement nor a notable argument on the distribution to draw these elements from. Different strategies chosen for initializing $\mW$ are further discussed in Section\ref{sec:related_init}.

\section{Proposed Geometric Interpretation of Transfer Learning}
\label{sec:geometric}
To justify the impact of adopting a pretrained model, most research works rely on the concept of learning \textit{multiple levels of representations} \citep{zeiler2013stochastic, Goodfellow2016book}. According to this concept, higher-level representations contain more \textit{abstract features} \citep{bengio2012deep}. These representations correspond to the layers that are closer to the input of the model and usually are the ones that are maintained through the model adoption process \citep{zhong2016face}.

Another popular geometric view of DNNs, depicts the loss landscape in the space formed by all the model's parameters, regardless of the layers they belong to~\citep{li2018visualizing}. This perspective 
has provided researchers with the intuitions behind many optimization algorithms \citep{qian1999momentum, duchi2011adaptive, tieleman2012lecture, kingma2014adam}. 
Stochastic Gradient Descent (SGD) walks the parameters through the parameters' space under the light of local clues in the loss landscape to settle them in a minimum point, and hopefully this point provides the estimator a small loss when it is exposed to unseen data.

In order to employ the latter geometric view for understanding the concept of transfer learning, answering a fundamental question seems to be necessary:
\textit{in the space formed by the model's parameters, how much and in what direction should SGD drive the parameters to quickly adapt a pretrained model to a target task?} In other words, in the aforementioned space, \textit{how much the minimum of the target task's loss  deviates from that of the source task and what is a good trajectory to quickly approach it?}

On the way to search for an answer, let the euclidean space formed by the parameters $\vu$ be $\gS(\vu) = \gS(\vu_1, \vu_2, \dots, \vu_{|\vu|})$. Notice that, each point on the landscape of the loss that is depicted in parameter space $\gS(\vu)$, is in fact a point in $\gS(\vu, \ell)$ with the loss ($\ell$) being the only non-parametric dimension in this space. As mentioned earlier, to adopt a pretrained model for a classification task, the space of all model's parameters are often manually changed from $\gS(\vtheta,\vv)$ to $\gS(\vtheta,\mW)$ and we call $\vv$ and $\mW$ removed and appended parameters, respectively. So, during the adoption process, not only the number of dimensions in the parameter space may change, but also the appended parameters are generally not relevant to either of the tasks (the source task and the target task) before fine-tuning starts. In fact,
right after appending new parameters, spotting the minimum of the source task's loss---where the pretraining has landed in---becomes intractable. 

To make it possible to exploit the geometric view of the loss landscape for transfer learning, we only consider loss landscape in $\gS(\vtheta)$. This relaxed view enables our analysis to link the loss landscapes of the two tasks at the expense of having more complicated view of the loss landscape of the target task. In this scheme, the modifications made in the appended parameters are reflected as deformation of the loss landscape in the space of the pretrained parameters. More formally, moving in $\gS(\mW)$ is reflected as changes along the dimension $\ell$ of $\gS(\vtheta, \ell)$. This perspective is flexible enough to symbolically describe the effect of different transfer learning strategies as it is shown in Figure \ref{fig:space_classic} and Figure \ref{fig:space_alternative}. 

The intuition provided in these figures is novel in the sense that it can reflect the modifications in $\gS(\vtheta)$ in an abstract way while having an idea about how the loss landscape is reflected in that space. It relaxes the complexity in depicting a heterogeneous space formed by both the pretrained and appended parameters, and opens up the opportunity to geometrically link the source task's and target task's loss landscapes. 

In each of the subplots in Figure \ref{fig:space_classic} or Figure \ref{fig:space_alternative}, a model pretrained on task $\gT_s$ is adopted to accelerate learning the target task $\gT_t$. The center of the gray contours at the bottom, marked with $\vtheta^*_{\gT_s}$, represents the minimum in the loss landscape of $\gT_s$ where $\vtheta$ settles in, at the end of the pretraining. Similarly, the aiming minimum in the loss landscape of $\gT_t$ is shown with $\vtheta^*_{\gT_t}$, though its location in the space is subjected to change because the appended parameters are modified by the optimization algorithm. Thus, to make a clearer view, changes in the location of $\vtheta^*_{\gT_t}$  is shown by multiple sets of contours with a grey-level set at the top to indicate the initial location and is following by several color-tempered sets which represent the deformations made in the loss landscape. we have simplified the landscape deformations with simple affine transformations (shift, scale and rotation) applied on the minimum; however, many other kinds of transformations are also possible. The levels of the loss landscapes are shown with differently tempered colors. The black is the coldest color used to show the minima and ivory is used to describe the high altitude surfaces (with large loss values). Notice that, the chosen shapes are symbolic to simply show the effect of each method; otherwise, the true patterns of minima could be of any arbitrary shape \citep{skorokhodov2019loss, czarnecki2019deep}.

Starting from $\vtheta = \vtheta^*_{\gT_s}$, the goal of the optimization is to make $\vtheta$ get as close as possible to $\vtheta^*_{\gT_t}$. Figure \ref{fig:space_fe} describes feature extraction (e.g., employed by~\citet{donahue2014decaf}) in which only the appended parameters are trained and thus $\vtheta = \vtheta^*_{\gT_s}$ stays hold during the task-adaptation.
On the other hand in fine-tuning (e.g., employed by \citet{Girshick_2014_CVPR}) shown in Figure \ref{fig:space_ft}, both transferred and appended parameters are jointly trained. The blue curve represents the movements of $\vtheta$ in $\gS(\vtheta)$ corresponding to the steps taken by the optimization algorithm. 

As shown in Figure~\ref{fig:space_classic}, compared to feature extraction, fine-tuning can potentially guide $\vtheta$ closer to $\vtheta_{\gT_t}^*$. 
However, as depicted in Figure \ref{fig:space_ft} and described later in the paper, carelessly initializing $\mW$ can mislead the optimization algorithm to guide $\vtheta$ in a wrong direction at the beginning of fine-tuning.
If the initial steps be large, they take $\vtheta$ far away from $\vtheta_{\gT_t}^*$ and make the training slow. One possible solution employed by \citet{li2018learning} is to start with a \textit{classifier warmup} phase within which only $\mW$ is modified and $\vtheta$ is kept unchanged before they jointly be train as in the typical fine-tuning. This approach is depicted in Figure 
\ref{fig:space_wup} where the arrows and circled numbers determine the sequence of the applied changes.

The inclusion of a classifier warmup phase prevents $\mW$  from randomly distorting
the back-propagating gradients and, therefore, the initial modifications on $\vtheta$ become more likely to be in a path that leads to $\vtheta_{\gT_t}^*$.
However, since the distance from  $\vtheta_{\gT_t}^*$ is not predetermined by the first-order GD, it is also likely that the initial steps for $\vtheta$ overshoot the minimum. Clearly, it is not easy to find an optimal step size for these steps considering the commonly large number of dimensions in $\gS(\vtheta, \ell)$. As we will expand in the next sections, our proposed method for fine-tuning builds $\vtheta_{\gT_t}^*$ close to $\vtheta_{\gT_s}^*$ without notably misdirecting $\vtheta$ at anytime from beginning until the convergence. The geometrically interpretation of this method is shown in Figure \ref{fig:space_fast}.

\subsection{Factors that Control the Velocity of Convergence}

Ideally, SGD drives the pretrained parameters ($\vtheta$) toward a good minimum in the target task's loss landscape located at $\vtheta_{\gT_t}^*$, and at the same time, through modifying $\mW$ it guides $\vtheta_{\gT_t}^*$ toward $\vtheta$ in the opposite direction. 
The velocity of moving $\mW$ in $\gS(\mW)$ at optimization step $\tau$ is calculated as
\begin{equation}
    \label{eq:v_w_1}
    \mV_{\mW}^{(\tau)} = \mW^{(\tau+1)} - \mW^{(\tau)}.
\end{equation}
On the other hand, \Eqref{eq:recursive_w_fc} could be written in the following form
\begin{equation}
    \label{eq:w_diff}
    \mW^{(\tau+1)} = \mW^{(\tau)} +  \frac{\alpha}{M} {\mDelta^{T}}^{(\tau)} \mA^{(\tau)},
\end{equation}
which can turn \Eqref{eq:v_w_1} 
into
\begin{equation}
    \label{eq:v_w}
    \mV_{\mW}^{(\tau)} = \frac{\alpha}{M} {\mDelta^{T}}^{(\tau)} \mA^{(\tau)}.
\end{equation}
Moreover, the relative velocity of $\mW$ and $\ell$ is equal to $\mV_{\mW,\ell}=\frac{\partial\ell}{\partial \mW}$ which is already found in \Eqref{eq:grad_aNz}. Interestingly,
\begin{equation}
    \mV_{\mW,\ell} = \frac{1}{\alpha} \mV_{\mW},
\end{equation}
which is equal to stating that the velocity of deformations of the loss landscape---and in the extreme case, the velocity of shifting $\vtheta_{\gT_t}^*$---in $\gS(\vtheta)$  that is rooted from modifying $\mW$ is linearly related to what we found in \Eqref{eq:v_w}. 

Similarly, the velocity of moving $\vtheta$ in $\gS(\vtheta)$ at optimization step $\tau$ is
\begin{equation}
    \label{eq:velocity_simple}
    \mV_{\vtheta}^{(\tau)} = \vtheta^{(\tau+1)} - \vtheta^{(\tau)}.
\end{equation}
Since SGD is used to update the parameters, we have
\begin{equation}
    \label{eq:v_theta_before_plugged}
    \vtheta^{(\tau+1)} = \vtheta^{(\tau)} - \frac{\beta}{M} \left(\frac{\partial \ell^{(\tau)}}{\partial \mA^{(\tau)}}\right)^T \frac{\partial \mA^{(\tau)}}{\partial \vtheta^{(\tau)}},
\end{equation}
where $\beta$ is the learning rate with which $\vtheta$ is updated. Notice that in Section~\ref{sec:bg}, $\alpha$ is employed to represent the learning rate that SGD uses for updating $\mW$ and traditionally $\alpha=\beta$. By plugging the formula we obtained for calculating $\mW^{(\tau)}$ in Section~\ref{sec:bg}, \Eqref{eq:v_theta_before_plugged} can be updated to
\begin{equation}
\label{eq:v_theta_plugged}
    \mV_{\vtheta}^{(\tau)} = \frac{\beta}{M} \ \left(\mDelta^{(\tau)} \mW^{(\tau)}\right)^T \ \frac{\partial \mA^{(t)}}{\partial \vtheta^{(\tau)}}.
\end{equation}
From \Eqref{eq:recursive_w_fc} and \Eqref{eq:v_theta_plugged} we can conduct
\begin{equation}
    \label{eq:v_theta_and_w_plugged}
    \mV_{\vtheta}^{(\tau)} = \frac{\beta}{M} \left({\mW}^{(0)} + \frac{\alpha}{M} \sum_{t=1}^{\tau-1} {\mDelta^T}^{(t)} {\mA}^{(t)} \right)^T {\mDelta^T}^{(\tau)} \frac{\partial \mA^{(\tau)}}{\partial \vtheta^{(\tau)}},
\end{equation}
or even simpler
\begin{equation*}
    \mV_{\vtheta}^{(\tau)} = \frac{\beta}{M} \left(\sum_{t=0}^{\tau-1} \mV_{\mW}^{(t)} \right)^T {\mDelta^T}^{(\tau)} \frac{\partial \mA^{(\tau)}}{\partial \vtheta^{(\tau)}}.
\end{equation*}
This shows that varying the rate with which $\mW$ is modified, not only changes the rate of deforming and shifting the aiming minimum of loss in $S(\vtheta)$, but also greatly influences how fast $\vtheta$ approaches it.
The calculated velocities take the origin of $\gS(\vtheta)$ as the \textit{privileged frame};\footnote{In other words, the origin of $\gS(\vtheta)$ is the reference of the calculated velocities} thus, according to our hypothesis the relative velocity of $\vtheta$ with respect to $\vtheta_{\gT_t}^*$ is expected to be different than $V_{\vtheta}$ (desirably larger). 
An interesting fact about \Eqref{eq:v_theta_and_w_plugged} and \Eqref{eq:v_w} is that altering $\beta$ only influences the speed of updating $\vtheta$ whereas $\alpha$ impacts the speed of shifting the aimed minimum in addition to that. As we will explain in Section~\ref{sec:method}, this fact helps FAST to provide a less compromised generalized performance during the course of training. 

\begin{figure}[t]
    \centering
    \begin{subfigure}{0.47\textwidth}
        \centering
        \def\svgwidth{7.1cm}
\begingroup%
  \makeatletter%
  \providecommand\color[2][]{%
    \errmessage{(Inkscape) Color is used for the text in Inkscape, but the package 'color.sty' is not loaded}%
    \renewcommand\color[2][]{}%
  }%
  \providecommand\transparent[1]{%
    \errmessage{(Inkscape) Transparency is used (non-zero) for the text in Inkscape, but the package 'transparent.sty' is not loaded}%
    \renewcommand\transparent[1]{}%
  }%
  \providecommand\rotatebox[2]{#2}%
  \newcommand*\fsize{\dimexpr\f@size pt\relax}%
  \newcommand*\lineheight[1]{\fontsize{\fsize}{#1\fsize}\selectfont}%
  \ifx\svgwidth\undefined%
    \setlength{\unitlength}{579.40002441bp}%
    \ifx\svgscale\undefined%
      \relax%
    \else%
      \setlength{\unitlength}{\unitlength * \real{\svgscale}}%
    \fi%
  \else%
    \setlength{\unitlength}{\svgwidth}%
  \fi%
  \global\let\svgwidth\undefined%
  \global\let\svgscale\undefined%
  \makeatother%
  \begin{picture}(1,0.97514666)%
    \lineheight{1}%
    \setlength\tabcolsep{0pt}%
    \put(0,0){\includegraphics[width=\unitlength,page=1]{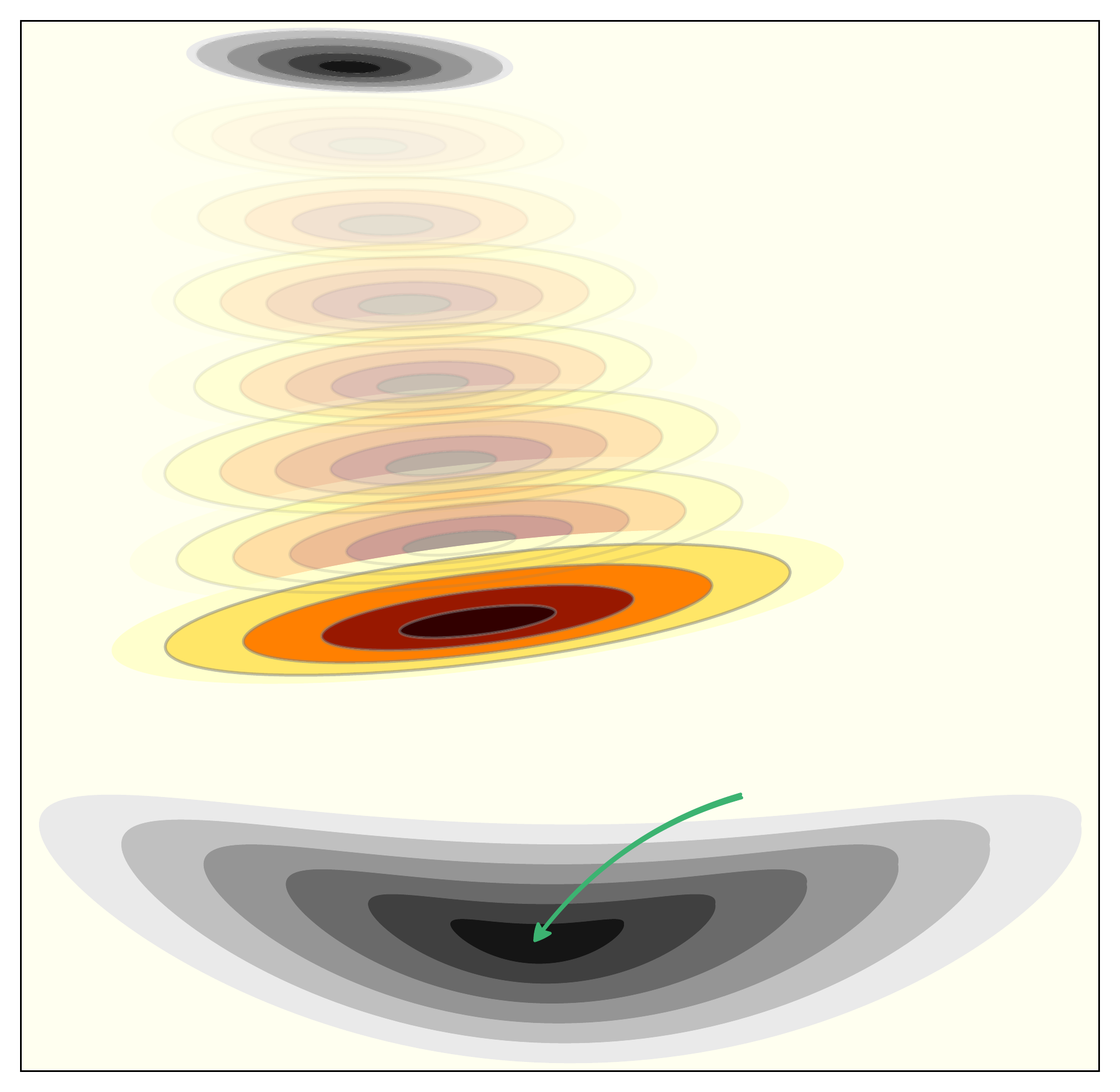}}%
    \put(0.6481076,0.27026389){\makebox(0,0)[lt]{\lineheight{1.25}\smash{\begin{tabular}[t]{l}\ $\vtheta^*_{\gT_s}$\end{tabular}}}}%
    \put(0,0){\includegraphics[width=\unitlength,page=2]{space_fe.pdf}}%
    \put(0.40928662,0.33020048){\makebox(0,0)[lt]{\lineheight{1.25}\smash{\begin{tabular}[t]{l}\ $\vtheta^*_{\gT_t}$\end{tabular}}}}%
  \end{picture}%
\endgroup%

        \caption{Feature extraction}
        \label{fig:space_fe}
    \end{subfigure}
    \begin{subfigure}{0.47\textwidth}
        \centering
        \def\svgwidth{7.1cm}
\begingroup%
  \makeatletter%
  \providecommand\color[2][]{%
    \errmessage{(Inkscape) Color is used for the text in Inkscape, but the package 'color.sty' is not loaded}%
    \renewcommand\color[2][]{}%
  }%
  \providecommand\transparent[1]{%
    \errmessage{(Inkscape) Transparency is used (non-zero) for the text in Inkscape, but the package 'transparent.sty' is not loaded}%
    \renewcommand\transparent[1]{}%
  }%
  \providecommand\rotatebox[2]{#2}%
  \newcommand*\fsize{\dimexpr\f@size pt\relax}%
  \newcommand*\lineheight[1]{\fontsize{\fsize}{#1\fsize}\selectfont}%
  \ifx\svgwidth\undefined%
    \setlength{\unitlength}{579.40002441bp}%
    \ifx\svgscale\undefined%
      \relax%
    \else%
      \setlength{\unitlength}{\unitlength * \real{\svgscale}}%
    \fi%
  \else%
    \setlength{\unitlength}{\svgwidth}%
  \fi%
  \global\let\svgwidth\undefined%
  \global\let\svgscale\undefined%
  \makeatother%
  \begin{picture}(1,0.97514666)%
    \lineheight{1}%
    \setlength\tabcolsep{0pt}%
    \put(0,0){\includegraphics[width=\unitlength,page=1]{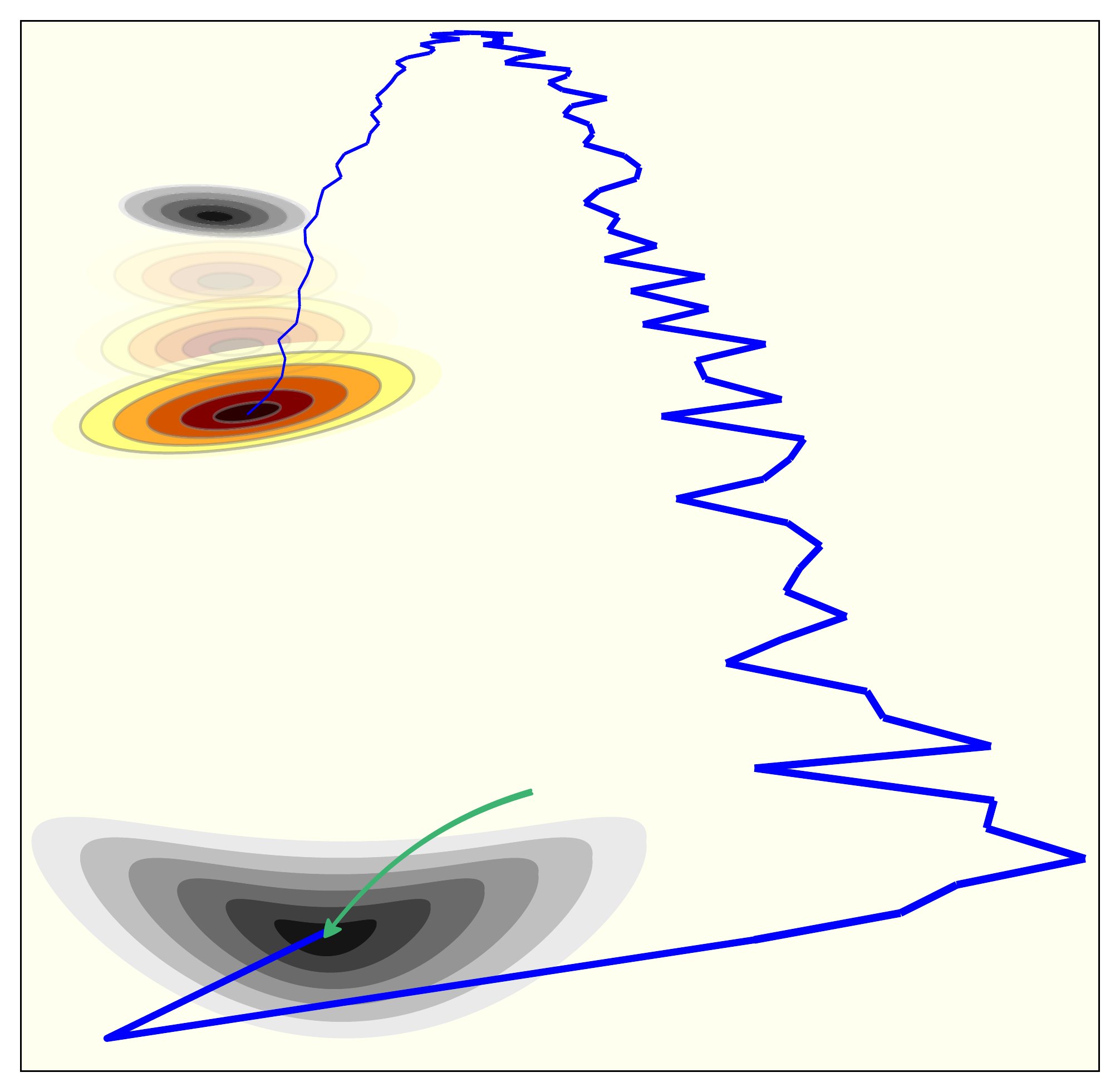}}%
    \put(0.46071798,0.27381771){\makebox(0,0)[lt]{\lineheight{1.25}\smash{\begin{tabular}[t]{l}\ $\vtheta^*_{\gT_s}$\end{tabular}}}}%
    \put(0,0){\includegraphics[width=\unitlength,page=2]{space_ft.pdf}}%
    \put(0.20345184,0.51855367){\makebox(0,0)[lt]{\lineheight{1.25}\smash{\begin{tabular}[t]{l}\ $\vtheta^*_{\gT_t}$\end{tabular}}}}%
  \end{picture}%
\endgroup%

        \caption{Fine-tuning}
        \label{fig:space_ft}
    \end{subfigure}
    \caption{
    The loss landscapes and optimization steps for adapting a hypothetical pretrained model to a target task in the geometric space of transferred parameters using the classical transfer learning methods.
    }
    \label{fig:space_classic}
\end{figure}

\begin{figure}[t]
    \centering
    \begin{subfigure}{0.47\textwidth}
        \centering
        \def\svgwidth{7.1cm}
\begingroup%
  \makeatletter%
  \providecommand\color[2][]{%
    \errmessage{(Inkscape) Color is used for the text in Inkscape, but the package 'color.sty' is not loaded}%
    \renewcommand\color[2][]{}%
  }%
  \providecommand\transparent[1]{%
    \errmessage{(Inkscape) Transparency is used (non-zero) for the text in Inkscape, but the package 'transparent.sty' is not loaded}%
    \renewcommand\transparent[1]{}%
  }%
  \providecommand\rotatebox[2]{#2}%
  \newcommand*\fsize{\dimexpr\f@size pt\relax}%
  \newcommand*\lineheight[1]{\fontsize{\fsize}{#1\fsize}\selectfont}%
  \ifx\svgwidth\undefined%
    \setlength{\unitlength}{579.40002441bp}%
    \ifx\svgscale\undefined%
      \relax%
    \else%
      \setlength{\unitlength}{\unitlength * \real{\svgscale}}%
    \fi%
  \else%
    \setlength{\unitlength}{\svgwidth}%
  \fi%
  \global\let\svgwidth\undefined%
  \global\let\svgscale\undefined%
  \makeatother%
  \begin{picture}(1,0.97514666)%
    \lineheight{1}%
    \setlength\tabcolsep{0pt}%
    \put(0,0){\includegraphics[width=\unitlength,page=1]{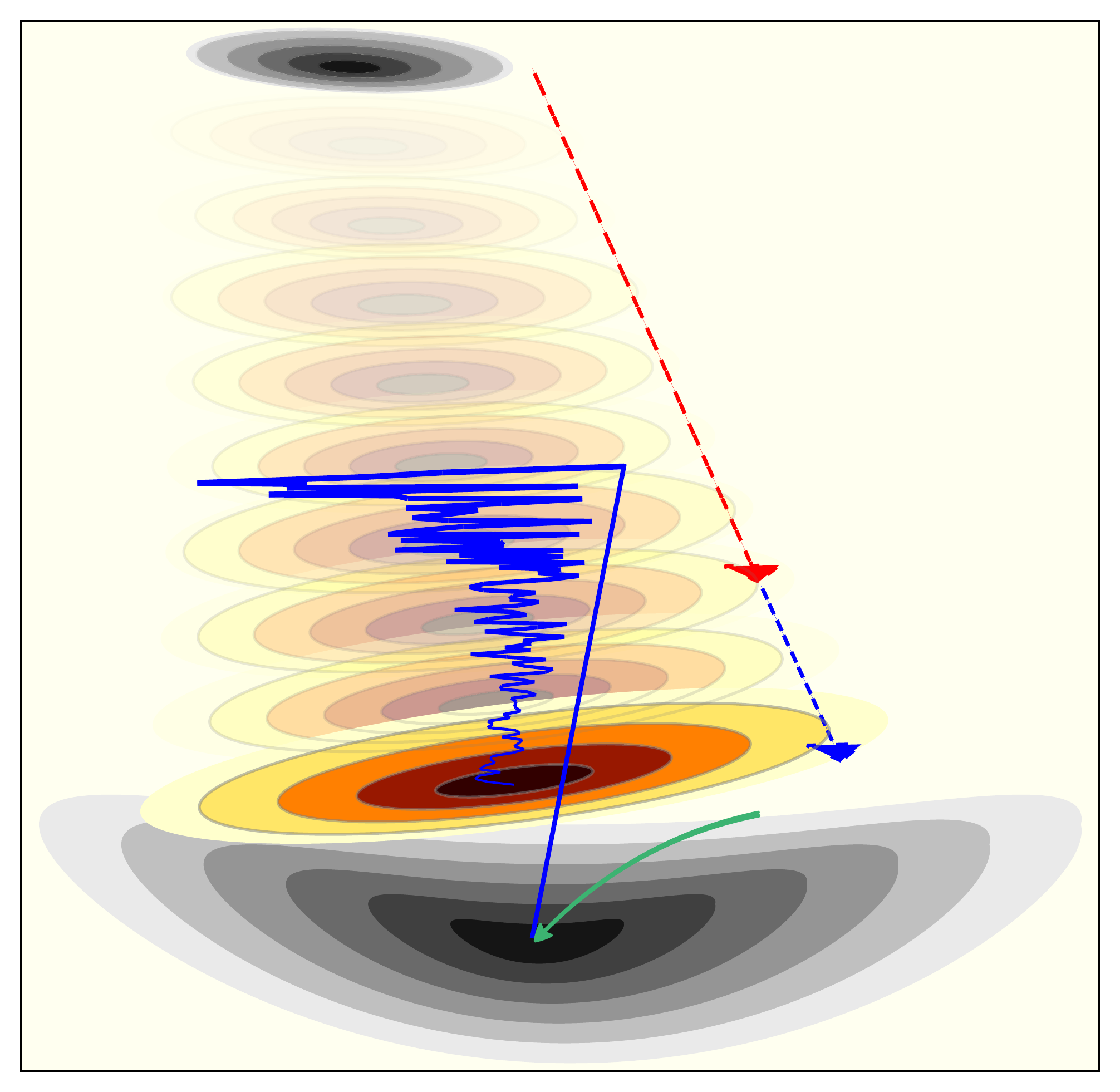}}%
    \put(0.68262606,0.25300465){\makebox(0,0)[lt]{\lineheight{1.25}\smash{\begin{tabular}[t]{l}\ $\vtheta^*_{\gT_s}$\end{tabular}}}}%
    \put(0,0){\includegraphics[width=\unitlength,page=2]{space_wup.pdf}}%
    \put(0.13126671,0.08449174){\makebox(0,0)[lt]{\lineheight{1.25}\smash{\begin{tabular}[t]{l}\ $\vtheta^*_{\gT_t}$\end{tabular}}}}%
    \put(0,0){\includegraphics[width=\unitlength,page=3]{space_wup.pdf}}%
    \put(0.6387467,0.70080331){\color[rgb]{1,1,1}\makebox(0,0)[lt]{\lineheight{1.25}\smash{\begin{tabular}[t]{l}1\end{tabular}}}}%
    \put(0,0){\includegraphics[width=\unitlength,page=4]{space_wup.pdf}}%
    \put(0.58977726,0.34541998){\color[rgb]{1,1,1}\makebox(0,0)[lt]{\lineheight{1.25}\smash{\begin{tabular}[t]{l}2\end{tabular}}}}%
  \end{picture}%
\endgroup%

        \caption{Fine-tuning with classifier warmup (as employed by~\citet{li2018learning})}
        \label{fig:space_wup}
    \end{subfigure}
    \begin{subfigure}{0.47\textwidth}
        \centering
        \def\svgwidth{7.1cm}
\begingroup%
  \makeatletter%
  \providecommand\color[2][]{%
    \errmessage{(Inkscape) Color is used for the text in Inkscape, but the package 'color.sty' is not loaded}%
    \renewcommand\color[2][]{}%
  }%
  \providecommand\transparent[1]{%
    \errmessage{(Inkscape) Transparency is used (non-zero) for the text in Inkscape, but the package 'transparent.sty' is not loaded}%
    \renewcommand\transparent[1]{}%
  }%
  \providecommand\rotatebox[2]{#2}%
  \newcommand*\fsize{\dimexpr\f@size pt\relax}%
  \newcommand*\lineheight[1]{\fontsize{\fsize}{#1\fsize}\selectfont}%
  \ifx\svgwidth\undefined%
    \setlength{\unitlength}{579.40002441bp}%
    \ifx\svgscale\undefined%
      \relax%
    \else%
      \setlength{\unitlength}{\unitlength * \real{\svgscale}}%
    \fi%
  \else%
    \setlength{\unitlength}{\svgwidth}%
  \fi%
  \global\let\svgwidth\undefined%
  \global\let\svgscale\undefined%
  \makeatother%
  \begin{picture}(1,0.97514666)%
    \lineheight{1}%
    \setlength\tabcolsep{0pt}%
    \put(0,0){\includegraphics[width=\unitlength,page=1]{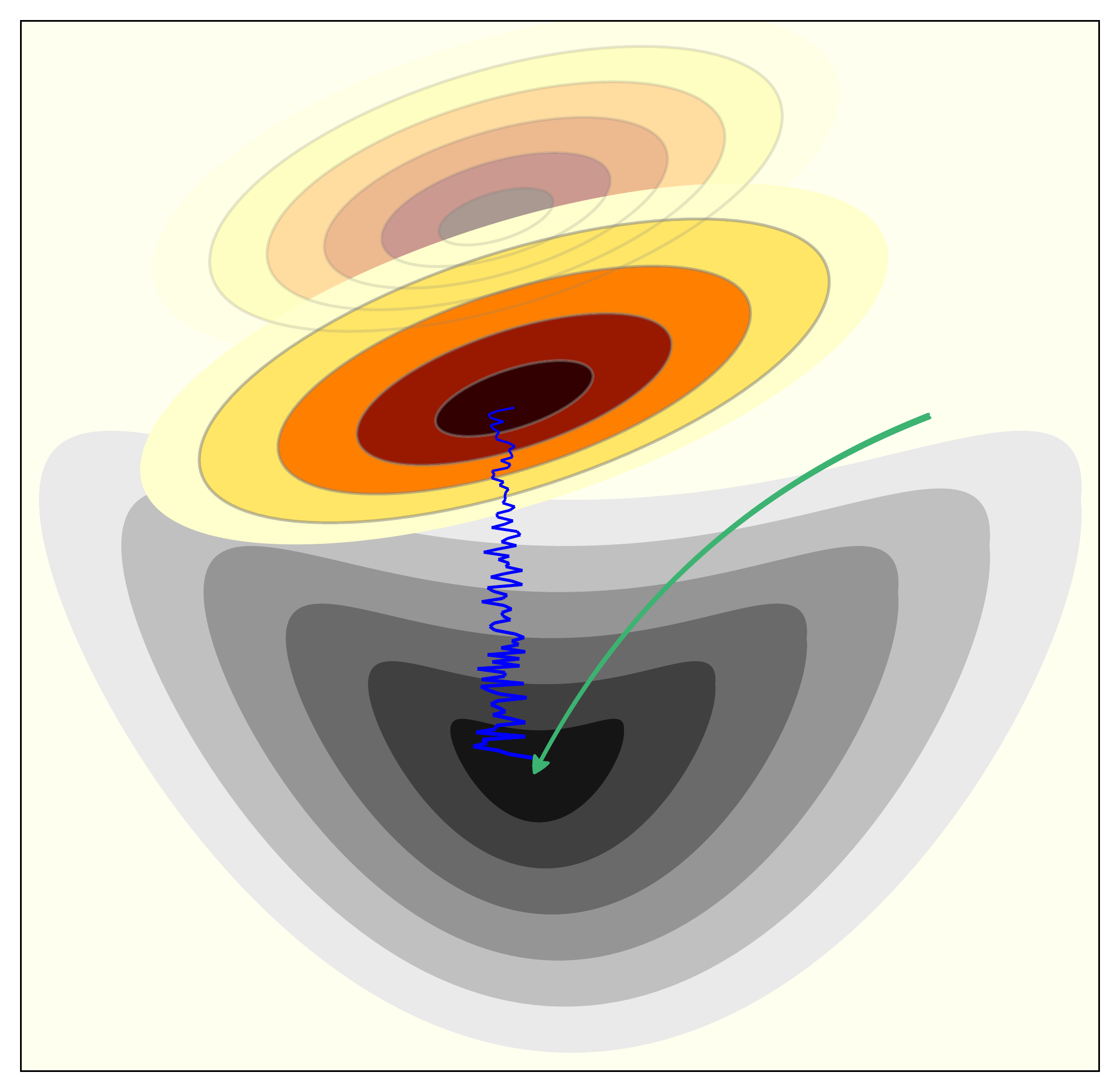}}%
    \put(0.8034407,0.60975053){\makebox(0,0)[lt]{\lineheight{1.25}\smash{\begin{tabular}[t]{l}\ $\vtheta^*_{\gT_s}$\end{tabular}}}}%
    \put(0,0){\includegraphics[width=\unitlength,page=2]{space_fast.pdf}}%
    \put(0.13126671,0.12776371){\makebox(0,0)[lt]{\lineheight{1.25}\smash{\begin{tabular}[t]{l}\ $\vtheta^*_{\gT_t}$\end{tabular}}}}%
  \end{picture}%
\endgroup%

        \caption{FAST (ours) \\ \ \ }
        \label{fig:space_fast}
    \end{subfigure}
    \caption{
    A symbolic comparison between our proposed method for fine-tuning and fine-tuning with classifier warmup in the geometric space of transferred parameters.
    }
    \label{fig:space_alternative}
\end{figure}

\subsection{Minimum Makeup}
Logistic softmax function normalizes its inputs such that they sum up to one. The magnitude of its inputs is exponentially reflected in the discrepancy among its outputs. The negative log-likelihood loss that is applied on top of it, is notably affected by this characteristic. By initializing the classifier's weights to values that are closer to zero, the loss becomes further independent from the extracted features. Equivalently, as the entries of $\mW^{(0)}$ go closer to zero, the landscape of the loss in $\vtheta$ becomes flatter, until a point where everything is leveled and equal to $\ln{(N)}$. {In this situation the loss becomes independent from the extracted features}. That is, for all possible values of $\vtheta$ in $\gS(\vtheta, \ell)$ we have $\ell=0$. For this reason, the initial state of the minima in the target task's loss landscape is not shown with gray colors in Figure \ref{fig:space_fast} unlike other plots in Figures \ref{fig:space_classic} and \ref{fig:space_alternative}. Because of its level of freedom, a flat loss landscape in the space of $\vtheta$ is much easier to deform. 
If done carefully, it can be immediately deformed such that a minimum is placed close to the current state of $\vtheta$ (this can be implied from the arguments in Section~\ref{sec:method_init}). In fact, instead of finding a solution to increase the relative speed between $\vtheta$ and $\vtheta^{*}_{\gT_t}$, this strategy makes it possible to initially build a minima close to where $\vtheta$ is located in $\gS(\vtheta)$. In Section~\ref{sec:method}, we describe how FAST uses this characteristic to decrease the number of steps required to converge.


\section{FAST}
\label{sec:method}
Fine-tuning a pretrained CNN model on a new classification task usually requires the parametric classifier to be rebuilt accordingly. The most common way of doing this is to simply replace the parameters of the pretrained classifier with randomly-initialized ones that satisfy the dimension of the target task~\citep{chu2016best}.
We argue that, optimizing the reconstructed model carelessly, 
can significantly contribute to a phenomenon known as catastrophic forgetting~\citep{goodfellow2013empirical,mccloskey1989catastrophic}. 
We show that a large $||\mW||_F$ can make the model highly biased toward certain classes at the beginning of the fine-tuning procedure.
Not only could this kind of prejudgement misrepresent the usability of the features for the new task, but it could also damage the transferred knowledge in at least two ways. First, the gradients back-propagated toward parameters of the feature extractor include an error term that is not correlated with neither the transferred knowledge nor the target task in a meaningful way.
Second, even if such an error is in a correct direction, it is distorted (weighted) by the random-initialized parameters of the classifier, which in turn affects the pretrained parameters during back-propagation. The outcome of these two issues is symbolically reflected in the unguided initial direction of the blue curve in Figure \ref{fig:space_ft}.

Depending on how largely the feature extractor's parameters are updated initially, the transferred knowledge may be highly forgotten due to massive changes in the relations among the pretrained parameters. Forgetting the previously learned knowledge before learning anything about the new task is clearly in contrast with the essence of transfer learning. The most immediate consequence of such prevailing detriment is that it can delay the convergence. As discussed in Section~\ref{sec:related_tl}, classifier warmup employed by \cite{li2018learning}, can alleviate the problems caused by initializing $\mW$ randomly. However, as shown in Section~\ref{sec:exp_wup}, the model is still prone to catastrophic forgetting right after the classifier's warmup phase ends. In this case, updates to the parameters of the feature extractor are likely in a correct direction but with an uncontrolled magnitude, which considering how the first order GD works, does not completely prevent catastrophic forgetting and may overshoot the minimum it is aiming to converge to. 
Figure \ref{fig:space_wup} intuitively reflects the effect of classifier warmup. In this figure, the red arrow marked with a circled number is plotted to show that through classifier warmup phase a minimum goes close to $\vtheta^{(0)}$ while $\beta=0$. Also, the blue circled number indicates that SGD starts modifying the pretrained parameters right after the first phase is finished.

Our proposed method, FAST, solves the aforementioned problems and hence accelerates the training process. FAST consists of two simple modifications compared to the traditional fine-tuning:
\begin{enumerate}
    \item $\mW$ is initialized with zero (see \EqrefNP{eq:ENTAME_init}).
    \item Learning rates $\alpha$ and $\beta$ are found such that $\vtheta$ and $\gT^*_2$ approach each other in $\gS(\vtheta)$ with a relative velocity that is small enough so that they do not broadly cross each other, but large enough so that the anytime performance is not compromised.
    
\end{enumerate}
Basically, FAST makes the fine-tuning procedure more robust by dampening large and noisy updates to the parameters. 
Algorithm~\ref{alg:sgd_FAST} contrasts our method with the prevalent approach described in Algorithm \ref{alg:sgd_trad}. In these algorithms, SGD with momentum~\citep{qian1999momentum} is used to optimize the objective; however, as we will see in Section~\ref{sec:exp_opt}, FAST also could help the stability and speed of the convergence when more complex optimization algorithms like Adam~\citep{kingma2014adam} is used.

In Algorithm \ref{alg:sgd_FAST}, rates of applying updates on the classifier and the feature extractor are separately chosen. In the first look this seems more challenging because of having an extra hyper-parameter but we will show the necessity of separately choosing $\alpha$ and $\beta$ in Sections \ref{sec:method_lr} and \ref{sec:exp_lr}. In the rest of this section, we justify and/or provide intuitions behind the modifications FAST makes compared to the traditional fine-tuning.   

\begin{algorithm}[t]
    \DontPrintSemicolon
    \caption{Traditional way of fine-tuning a pretrained DNN on a classification task with logistic softmax classifier using SGD+momentum}
    \label{alg:sgd_trad}
    \SetKwInOut{Input}{input}
    \SetKwInOut{Output}{output}
    \Input{$\vtheta, \ \beta, \mX, \ D, \ N$ and $\gM$}
    \Output{$\vtheta$, and $\mW$}
    Generate new parameters $\mW \in \R^{N\times D}$\;
    $t = 0, \ \gV^{(0)}_{\mW}=\gV^{(0)}_{\vtheta}=0$, $\alpha \gets \beta$\;
    Initialize the classifier s.t. $\E[\mW] = \epsilon\ ; \ \epsilon \longrightarrow 0$\;
    \While{Validation goal not satisfied}
  {
        \tcc{forward pass}
  		$\mA^{(t)} \gets \phi(\mX^{(t)}; \vtheta^{(t)})$\;
  		$\mZ^{(t)} \gets f(\mA^{(t)}; \mW^{(t)})$\;
  		$\hat{\mY}^{(t)} \gets \frac{\exp{(\mZ^{(t)}})}{\exp{(\mZ^{(t)})}\vone_{N\times N}}$ \; 
  		\tcc{backward pass}
  		$\mDelta^{(t)} \gets \mY^{(t)} -\hat{\mY}^{(t)}$ \;
  	    Use $\alpha$ \& $\beta$ to calculate $\mV_{\mW}^{(t)}$ and $\mV_{\vtheta}^{(t)}$ 
  	    from \Eqref{eq:v_w} and \Eqref{eq:v_theta_plugged},  respectively\;
  	    $\gV^{(t)}_{\mW} \gets \gM \gV^{(t-1)}_{\mW} + (1-\gM) \mV_{\mW}^{(t)}\ ; \  \gV^{(t)}_{\vtheta} \gets \gM \gV^{(t-1)}_{\vtheta} + (1-\gM) \mV_{\mW}^{(t)}$\;
  		$\mW^{(t+1)} \gets \mW^{(t)} + \gV^{(t)}_{\mW} $ ; $\ \vtheta^{(t+1)} \gets \vtheta^{(t)} + \gV^{(t)}_{\vtheta}$\;
  		$t \gets t+1$\;
  }
\end{algorithm}

\begin{algorithm}[t]
    \DontPrintSemicolon
    \caption{
    FAST for fine-tuning a pretrained DNN on a classification task}
    \label{alg:sgd_FAST}
    \SetKwInOut{Input}{input}
    \SetKwInOut{Output}{output}
    \Input{$\vtheta, \ \alpha, \ \beta, \mX, \ D$ and $N$}
    \Output{$\vtheta$, and $\mW$}
    Generate new parameters $\mW \in \R^{N\times D}$\;
    $t = 0, \ \gV^{(0)}_{\mW}=\gV^{(0)}_{\vtheta}=0$;
    Initialize the classifier s.t. $\E[\mW] = \E[\mW\mW^T] = \epsilon\ ; \ \epsilon \longrightarrow 0$\;
    \While{Validation goal not satisfied}
  {
        \tcc{forward pass}
  		$\mA^{(t)} \gets \phi(\mX^{(t)}; \vtheta^{(t)})$\;
  		$\mZ^{(t)} \gets f(\mA^{(t)}; \mW^{(t)})$\;
  		$\hat{\mY}^{(t)} \gets \frac{\exp{(\mZ^{(t)}})}{\exp{(\mZ^{(t)})}\vone_{N\times N}}$ \; 
  		\tcc{backward pass}
  		$\mDelta^{(t)} \gets \mY^{(t)} -\hat{\mY}^{(t)}$ \;
  		Use $\alpha$ \& $\beta$ to calculate $\mV_{\mW}^{(t)}$ and $\mV_{\vtheta}^{(t)}$ 
  	    from \Eqref{eq:v_w} and \Eqref{eq:v_theta_plugged},  respectively\;
  	    $\gV^{(t)}_{\mW} \gets \gM \gV^{(t-1)}_{\mW} + (1-\gM) \mV_{\mW}^{(t)}\ ; \  \gV^{(t)}_{\vtheta} \gets \gM \gV^{(t-1)}_{\vtheta} + (1-\gM) \mV_{\mW}^{(t)}$\;
  		$\mW^{(t+1)} \gets \mW^{(t)} + \gV^{(t)}_{\mW} $ ; $\ \vtheta^{(t+1)} \gets \vtheta^{(t)} + \gV^{(t)}_{\vtheta}$\;
  		$t \gets t+1$\;
  }
\end{algorithm}

\subsection{Initialization for Entropy Maximization}
\label{sec:method_init}
From a psychological point of view, bias toward specific groups or subjects matters much more in the earlier stages of the human's life. Children's attitudes are quite flexible as early as ages three or four~\citep{byrnes1995teacher}. However, attitudes of grown-ups are more difficult to change. \textit{Social Learning Theory} implies that prejudgment is primarily learned~\citep{bandura1977social}. Similarly, we expect a neural network to stay neutral in associating an example to different classes prior to any training. Otherwise, any initial tendency toward specific decisions could not only make learning more difficult, but also the resulting representation (i.e., extracted-features in our learning scenario) may stay tied to the initial bias. For example, if prior to training, two classes are assumed to be very entangled, without evidence in the ground truth, this false assumption may implicitly require the learned representations to increase the number of discriminating features activated for those classes. This can make the truly similar classes starve from discriminating features.

\subsubsection[Why Does Careful Initialization of the Classifier Parameters Matter?]{Why Does Careful Initialization of $\mW$ Matter?}
Depending on the magnitude of $\mW^{(0)}$, the extracted features $\mA$ and the learning rate $\alpha$, one can expect an arbitrary update number $\tau$ until the relations in $\mW$ are altered.
During this period, the gradients that are injected to the feature extractor's parameters, are scaled by the dot-product between the error and $\mW^{(0)}$.
That is, for $\tau' < \tau$, if 
\begin{equation}
\label{eq:initi_gg}
\forall n \in \{1, 2, ... , N\} : ||\mW_n^{(0)}|| \gg \frac{\alpha}{M} ||\sum_{t=1}^{\tau'} {\mDelta_{:,n}^T}^{(t)} \mA^{(t)}||    
\end{equation}
then,
\begin{equation*}
    \frac{\partial \ell}{\partial \va^{(\tau')}} \approx - \vdelta^{(\tau')} \mW^{(0)} = -\sum_{n=1}^{N} \evdelta^{(\tau')}_i \mW^{(0)}_n.
\end{equation*}
It means that the extracted features get credit and penalty for correct and wrong class predictions, but this feedback is weighted by the randomly selected $\mW^{(0)}$.
Concretely, this behavior is not desired and may slow down or even damage the training process. Geometrically, this means that for a long time $\mV_{\mW}$ remains minuscule, so the relative velocity between $\vtheta$ and $\vtheta_{\gT_t}^*$ would not surpass what we calculated in \Eqref{eq:v_theta_and_w_plugged}. 
If the condition stated in \Eqref{eq:initi_gg} persists, the parameters of the feature extractor may eventually adapt so that they can decrease the error. In the extreme case $\mW$ is kept unchanged while updates are only applied to $\vtheta$. 
The empirical outcome of this scenario is shown in Section~\ref{sec:exp_fix_space}.

\subsubsection{How Does FAST Tackle the Initialization Problem?}
FAST maximizes the per-example entropy of the predicted labels by letting
\begin{equation}
    \label{eq:ENTAME_init}
    \forall \ n \in \{1,\dots,N\}, \forall \ e \in \{1,\dots,E\}: ||\emW^{(0)}_{n,e}||^2 = \epsilon \approx 0
\end{equation}
where epsilon is a number close to zero,
defined only in order to maintain logits in the computational graph.\footnote{ 
Usually the practical implementation of softmax is such that maintaining the computational graph is automatically handled even if $\epsilon$ is set to zero. 
}
Unless the magnitude of $\mA$ or the number of features $D$ is extremely large, we should get $\eva^{(0)}_n \approx 0$.
Passing this through softmax function results in  
\begin{equation}
    \label{eq:initial_haty}
    \forall n \in \{1,\dots N\} : \hat{\evy}_n^{(0)} = \frac{1}{N},
\end{equation}
which maximizes the per-example entropy of the predicted labels, that is
\begin{equation*}
    H(\hat{\vy}^{(0)}) \approx -\sum_{n=1}^{N} \frac{1}{N} \ln{\frac{1}{N}} = \ln{N}.
\end{equation*}
From the inference's perspective, the effect of initializing $\mW$ with close to zero values is similar to applying an extremely high temperature to the softmax function.
Defined by~\citet{hinton2015distilling}, softmax with \textit{temperature} $u$ is defined with a slight modification compared to~\Eqref{eq:sm}, such that
\begin{equation}
    p(\vz, u) = \frac{e^{{\vz}/{u}}}{e^{{\vz}/{u}}\vone_{N \times N}}.
\end{equation}

Even if $\epsilon$ is drawn randomly for each entry of $\mW^{(0)}$, \Eqref{eq:initial_haty} still holds with a high precision. Notice that since the logits are close to zero, the exponential functions work in a linear regime and could be approximated by their Taylor series as 
\begin{equation}
e^{\vz^{(0)}} \approx \vz^{(0)} + \vone_{N},
\end{equation}
so the differences in different draws of $\epsilon$ cannot make a big difference in $\hat{\vy}$ as long as their variance is close to zero.
FAST provides $\hat{\vy}$ with the maximum entropy that it can have, since every class is predicted equally per inferred example.

\subsubsection{Initial Reduction in the Variance of the Error}
Maximum entropy of predicted labels can reduce the average variance of the commencing back-propagated error if the first training mini-batch (batch) is balanced.\footnote{In this context, being balanced means having the same number of instances for each class.} To show this, we first expand the variance of the error as in
\begin{equation*}
    {||\vdelta||}_2^2 = {||\vy||}_2^2 + {||\hat{\vy}||}_2^2 - 2 \vy \hat{\vy}^T.
\end{equation*}
Equivalently, the average of this  variance across the instances in the first mini-batch (batch) could be written as  
\begin{equation}
\label{eq:delta_avg_variance}
    \begin{split} \frac{1}{M^{(0)}}\sum_{m=1}^{M^{(0)}} {||\mDelta_m^{(0)}||}_2^2 =
    1 + \frac{1}{M^{(0)}}\sum_{m=1}^{M^{(0)}} {||\hat{\mY}_m^{(0)}||}_2^2 - 2\psi,
    \end{split}
\end{equation}
where the constant value represents the L2-norm of the true labels which is always equal to one, that is ${||\vy||}_2^2=1$. $\psi$ is the average probability assignment for the correct labels or the \textit{soft accuracy} defined as
\begin{equation*}
    \psi= \frac{1}{M^{(0)}}\sum_{m=1}^{M^{(0)}} \hat{\mY}_m^{(0)} {\mY_m^{(0)}}^T.
\end{equation*}
Maximizing the {soft accuracy} is the goal of training. If $M$ is large enough and the first mini-batch (batch) is balanced, the initial soft accuracy is usually a random number with having $\frac{1}{N}$ as its sample mean. On the other hand, using Lemma \ref{th:yhat_norm_sup} or, in a more general way, employing Lemma \ref{th:yhat_norm_sm} along with Axiom \ref{th:sum_max} (all defined in Appendix \ref{appendix:append}), we know that that the average L2-norm of the predicted labels (second term on the right side of equality in \EqrefNP{eq:delta_avg_variance}) is minimized only when the entropy of the predicted labels are maximized for each instance. Hence, considering no prior knowledge about the new task, using our proposed initialization, a notable part of the variance of the error that could potentially mislead SGD toward representations is removed. Specifically, for the case of a balanced first mini-batch (batch), our method yields
\begin{equation*}
    \begin{split} \frac{1}{M^{(0)}}\sum_{m=1}^{M^{(0)}} {||\mDelta_m^{(0)}||}_2^2 =
    1 - \frac{1}{N}.
    \end{split}
\end{equation*}

\subsubsection{Automatic Shift between Training Phases}
In mapping function $f_{dp}$, $\mW$ is the {Jacobian matrix} required in the {gradient chain rule} to back-propagate from the label-space toward the feature-space. Initializing $\mW^{(0)}$ to values that are close to zero makes the back-propagation stall at $\va$ in the first backward phase, i.e.
\begin{equation*}
    \frac{\partial \ell}{\partial \va^{(0)}} \approx \vzero_{D}.
\end{equation*}
However, {the feature-space to label-space projection vectors} are updated such that
\begin{equation*}
    \mW^{(1)}_n = \frac{\lr}{M^{(0)}} {\mDelta_{:,n}^{(0)}}^T \mA^{(0)}
\end{equation*}
which could be rewritten as
\begin{equation}
    \label{eq:w2_separate}
    \mW^{(1)}_n = \frac{\lr}{M^{(0)}} {\mY_{:,n}^{(0)}}^T \mA^{(0)} - \frac{\lr}{M^{(0)}N} \vone^T_M \mA^{(0)}
\end{equation}
It takes a few steps until $||\mW||_F$ and consequently $\frac{\partial \ell}{\partial \va}$ become large enough to notably change the pretrained parameters. We name this period \textit{auto-warmup phase} since it is similar to classifier warmup employed by \citet{li2018learning}, except that instead of manually freezing the feature extractor's parameters, it keeps them almost intact by making $||\mW||_F$ small. Throughout the auto-warmup phase, the parameters of the classifier are modified to form a meaningful connection between the extracted features and the predicted labels. Assuming source and target tasks to have some similarities, this connection can significantly drop the entropy of the predicted labels based on the quality of the updates (that is how much the mini-batches represent the underlying data distribution). Geometrically, the landscape of the target task's loss in $\gS(\vtheta)$ may notably deform through this phase.
After taking enough number of steps, the blocked path to back-propagate the error toward the feature extractor is gradually opened up due to ${||\mW||}_F \neq 0$, and the training can go through its expected behavior.\footnote{Clearly, in this statement we suppose that the features do not come from the output of a layer with all \textit{dead neurons} To get familiar with dead neurons look at \citet{glorot2011deep} and \citet{lu2019dying}.}

\subsubsection[Classifier Parameters as a Feature-class Relevance Matrix]{$\mW$ as a Feature-class Relevance Matrix}
The second term on the right side of~\Eqref{eq:w2_separate} is identical for all rows of $\mW$, so in action, it does not make a predicted label win over another. 
Regardless of such additive value for all label estimators, if $\mY^{(0)}$ is stratified across classes (that is when the first training mini-batch is balanced), the rows of $\mW^{(1)}$ would resemble what \citet{snell2017prototypical} named prototypes. In other words, $\mW^{(1)}$ is a \textit{label-feature relevance matrix} formed based on the information in the first mini-batch (batch) of the extracted features. In the next forward pass of training, we have
\begin{equation}
    \evz^{(1)}_n = \va^{(1)} {\mW_n^{(1)}}^T,
\end{equation}
which implies how similar $\va^{(1)}$ is to the label-feature relevance representation of class $n$, according to what was already learned. Continuing, this 
implicitly forms a sort of moving average over label-feature relevance matrices, calculated from different mini-batches and, as expected, the lower the error becomes, comparably smaller changes are applied.
As we described, initializing $\mW$ to values that are close to zero causes the first updates to make a meaningful link between the obtained features and the objective of the new task. 
However, this deviation from zero could further escalate through the next updates.
Paying attention to \Eqref{eq:recursive_w_fc} reveals that the speed of modifying $\mW$ in each update is directly related to the amount of observed error. Hence, decreasing the training loss would decrease the amount of changes in $\mW$ upon preserving the distribution of extracted features in different mini-batches. Fortunately, due to the prevailing use of normalization layers in deep CNNs, the extracted features are robust to sudden changes even with a large shift in distribution or magnitude of the raw input batches.  

\subsubsection{Scaling up the Number of Classes}
If the number of classes increases, so does the probability of accidentally biasing toward a specific class. Thus, the maximum entropy initialization sounds even more essential. Interestingly, if \Eqref{eq:initial_haty} holds, the average error is not changed by increasing the number of classes. That is because the prediction error is fixed to
\begin{equation*}
    \evdelta_{n}^{(0)} = 
    \begin{cases}
      1-\frac{1}{N} , & \text{if}\ \evy^{(0)}_n = 1 \\
      -\frac{1}{N} , & \text{otherwise}
    \end{cases}.
\end{equation*}
On the other hand, ${||\vy^{(0)}||}_2$ and consequently ${||\vdelta^{(0)}||}_2$ drop by increasing the value of $N$.

\subsection{Progressive Entropy Minimization}
\label{sec:method_lr}
\textit{Overconfidence} is a type of subjective bias by which people's certainty in their judgement is much higher than their objective accuracy~\citep{pallier2002role}. In support of the well-known Dunning–Kruger effect \citep{kruger1999unskilled},  \cite{sanchez2018overconfidence} show that at the beginning of learning a classification task, people gain a lot of overconfidence. By learning more, this surge of confidence usually is followed by a downturn and then a gradual rise that is more correlated with the person's accuracy \citep{sanchez2018overconfidence}. However, at any stage it is ideally desired to match certainty with the amount of knowledge one has obtained about a matter \citep{russo1992managing, winman2004subjective, arkes1988eliminating, speirs2010reducing}.
Moreover, \cite{meyniel2015sense} argue that in humans, confidence is not derived from a heuristic process but rather is formed during their learning process. In the context of classification with DNNs, the probabilities that the logistic softmax function produces are sensitive to the magnitude of the input logits. It makes much more sense to have a gradual decrease in the entropy of the predicted labels while training a DNN; however, when we use transfer learning, a sudden raise in the confidence may happen during the first few updates. FAST prevents the transferred knowledge from being disturbed by a sudden jump in the amount of confidence.

\subsubsection[Controlling the growth of the norm of the classifier parameters]{Controlling the growth of $||\mW||_F$} Forgetting can still happen after the auto-warmup phase. That is, when $||\mW||_F$ grows much faster than a notable amount of decline in the norm of prediction error. In this situation, although $\mW$ is meaningful and back-propagating through it may not mislead gradients on $\vtheta$, it can highly raise their magnitude.  Assuming that a first order GD is used, a very large step, even if pointing toward a minimum in the landscape of the loss, can still be a large damage to the transferred knowledge. Geometrically, in this scenario, at a particular
step the relative speed of $\vtheta$ and $\vtheta_{\gT_t}^*$  in $\gS(\vtheta)$, may become much larger than the distance between them and they keep broadly crossing each other. A more severe case could be observed during a transition to fine-tuning when a classifier warmup phase is used. This is further explained in Section~\ref{sec:exp_wup}. FAST solution to this problem is to control the magnitude with better choices for $\alpha$ and $\beta$.

\subsubsection{The Case of Different Learning Rates}
In Section~\ref{sec:geometric}, we geometrically investigated the impacts of altering $\alpha$ and $\beta$. Initializing $\mW$ with close to zero values makes the role of these learning rates even more pronounced. That is, eliminating ${\mW^{(0)}}^T$ from \Eqref{eq:v_theta_and_w_plugged} makes $V_{\vtheta}$ to be proportional to the product of rates, i.e
\begin{equation}
    \label{eq:v_theta_propto}
    V_{\vtheta} \propto \alpha\beta.
\end{equation}
As indicated earlier, $\alpha$ affects both the speed of $\vtheta$ and the minimum it approaches, and from what we just mentioned this effect is directly proportional to both of them; therefore, $\alpha$ modifies the relative velocity between $\vtheta$ and its closest minimum with a square order. Based on \Eqref{eq:v_theta_propto}, if similar to the traditional fine-tuning we also choose $\alpha=\beta$, the order becomes cubic. 
Therefore, choosing necessarily equal learning rates ties up the velocities of $\vtheta$ and $\vtheta_{\gT_t}^*$ and can cause the anytime performance to be compromised.
The empirical outcomes of separately tuning $\alpha$ and $\beta$ is further expanded in Section~\ref{sec:exp_lr}.

\section{Related works and discussion}
\label{sec:relate}
Initialization and optimization are closely-related subjects in deep learning~\citep{sutskever2013importance}, although they have often been studied individually. In fact, the former may help the latter to speed up the convergence. Optimization algorithms are usually used to search through the space that is formed by the model's parameters. GD, in particular, surfs this space by moving through it and looking at the objective of the search (minimizing the loss) after each step. Undoubtedly, the number of steps to reach the objective is affected by where the first step is taken from. Transferring knowledge in form of pretrained parameters from one neural network to another can be viewed as a kind of parameter initialization. Furthermore, we already have emphasized the  importance of a  customized optimization procedure toward the efficiency of fine-tuning the pretrained models. 
In the following, we take a closer look at how previous works are related to ours in respect of initialization, optimization, and transfer learning concepts.

\subsection{Initialization}
\label{sec:related_init}
Most widely known studies on parameter initialization in DNNs focus on preserving the variance of the flowing data along the depth~\citep{glorot2010understanding, He_2015_ICCV}. This strategy prevents the activations from vanishing/exploding and allows training faster and deeper networks.
Specifically, for a parameterized linear layer with $D$ input and $N$ output neurons, Xavier's initialization method~\citep{glorot2011deep} recommends initializing the parameters from a symmetric uniform distribution with variance equal to $\frac{1}{6D}$ to preserve the variance in the forwarding flow (fan-in mode), and a variance equal to $\frac{1}{6N}$ to preserve the variance in the backwarding flow (fan-out mode). Kaiming initialization~\citep{He_2015_ICCV} takes ReLU activations into account and relaxes the distribution to be symmetric and independent and identically distributed (i.i.d.). They provide variances $\frac{2}{D}$ and $\frac{2}{N}$ for the mentioned directions respectively. \cite{arpit2019benefits} recently acknowledged that the initialization introduced by~\citet{He_2015_ICCV} is the optimal one for a ReLU network trained from scratch. 
They recommended the use of the fan-out mode.
In line with the mentioned initialization methods, to avoid the exploding or vanishing problem, ~\cite{saxe2013exact, mishkin2015all, xiao2018dynamical} initialize the weights with orthogonal matrices. Similarly, \cite{balduzzi2017shattered} proposed \textit{look linear initialization} which prevents gradients in deep feed-forward networks to further shatter along the depth.\footnote{Gradient shattering is defined by~\citet{balduzzi2017shattered} to indicate the phenomenon in which the back-propagating error may further go toward resembling white noise as the depth increases.}

\subsubsection{Initialization for the Classifier}
So far, the research community has paid little attention to the initialization for task-adaptation. In many recent studies, the classifier's parameters that are appended to a pretrained model are initialized using variance preserving methods ~\citep{li2018learning, shermin2018transfer, wang2019easy}. Included in this group, \textit{EfficientNet}~\citep{tan2019efficientnet,xie2019self}, which shows state-of-the-art classification performance on ImageNet's ILSVRC-2012 data set~\citep{russakovsky2015imagenet}, uses classifiers initialized with Xavier's method~\citep{glorot2011deep} for fine-tuning on new tasks~\citep{Tensorflow2017tpu}. Some works do not adjust the initialization based on the embedding size or the number of classes; for instance, \citet{wang2019easy} and \citet{krizhevsky2012imagenet} always choose 0.000025 and 0.001 as the variance of elements of $\mW^{(0}$ respectively.\footnote{\cite{krizhevsky2012imagenet} also use an intercept (bias) vector that is initialized to all ones.}
He et al.~\citep{He_2015_ICCV}, exempted the classification layer of the models used in their experiments from the distribution for initializing weights that they have recommended. This layer's distribution is stated to be found experimentally. Such a strategy could be traced down to even earlier practices in constructing deep neural networks ~\citep{Simonyan2014vgg}. Instead of speculation, we investigate the effect of the initialization of the classifier on the training procedure. However, we narrow down our focus to an optimal initialization for adapting a pretrained model to a new task. Our previous work on classifier initialization~\citep{varno2019efficient} and a recently published paper by~\citet{dodge2020fine} have evidently enlightened the importance of the initialization for task-adaptation. 
FAST uses similar initialization technique as of our previous work~\citep{varno2019efficient} but it highly improves the way its merits are exploited and comprehended.

\subsubsection{Deterministic Initialization for the Classifier} 
Looking into special learning scenarios, two papers \citep{Qi_2018_CVPR, gidaris2018dynamic} concurrently proposed to initialize the rows of $\mW$ used in cosine classifier by averaging the features of the support set for their corresponding class. These average vectors are similar to those~\citet{snell2017prototypical} called \textit{prototypes}. This kind of label-space weight initialization, known as \textit{weight imprinting}, was originally proposed for a \textit{joint-training}.\footnote{We use joint-training to indicate the learning scenario studied by~\citet{Qi_2018_CVPR} according to the terminology used by~\citet{li2018learning}.} The models employed by~\citet{dhillon2019baseline} use a similar technique but only to classify the novel target classes. Our method also initializes the weights deterministically (all zeros) but in contrast to that of~\citet{dhillon2019baseline}, our initialization maximizes the uncertainty at the beginning of the training which leads to smoothly guiding the label-space mapping toward an efficient direction. Classifiers that use FAST promote confidence in results with a gradual and rational pace that is correlated with the knowledge they obtain from the learning task. Additionally, unlike that of~\citet{dhillon2019baseline} our method does not need explicit calculations to find the proper initial values which makes it much easier to apply. 
A method more similar to our initialization is presented by~\cite{zhang2019fixup}---who generalized~\citet{balduzzi2017shattered} for residual networks. 
However, no analytical or even empirical reasoning is provided to support their choice for initializing the classifier's weights.

\subsection{Optimization}
\label{sec:related_opt}
Due to being computationally affordable, first order approximation of GD is the preferred optimization algorithm for training deep learning models. Based on the size of memory required for loading the training data set and the model's parameters, in each step, GD may only be applied on a portion of the data (mini-batch of data). This makes the behaviour of the algorithm dependent on the sequence of the observed mini-batches~\citep{mccoy2020does, dodge2020fine}; thus, the algorithm is often called Stochastic Gradient Descent (SGD) in the context of DNNs. Its simplest form known as \textit{vanilla SGD} applies
\begin{equation*}
    \vtheta^{(t+1)} = \vtheta^{(t)} - \beta \frac{\partial l^{(t)}}{\partial \vtheta^{(t)}},
\end{equation*}
for any set of parameters $\vtheta$.

\subsubsection{Speed of the Convergence}
A line of research focuses on reducing the number of required steps to reach a minimum in the loss landscape \citep{qian1999momentum, botev2017nesterov, duchi2011adaptive, zeiler2012adadelta, tieleman2012lecture, kingma2014adam}. Adding momentum to SGD \citep{qian1999momentum} decreases its dependency on the noise in individual mini-batches. It simply adds a moving average to the gradient of each parameter (look at line 10 of Algorithm \ref{alg:sgd_trad}). From a rather different point of view, \citet{duchi2011adaptive} introduced \textit{AdaGrad} to dampen the oscillation along very steep dimensions in the loss landscape.\footnote{Steepest dimension is the dimension that determines the degree of \textit{Lipschitz continuity} and could vary from one neighborhood to another in the loss landscape.}
It normalizes the gradient of the loss with respect to each parameter with its L$_2$-norm that is summed over all the previous steps. It could be thought as modifying the vanilla SGD by scaling the learning rate of each parameter $\evtheta_i$ at any step $t$ by $g^{(t)}_{\evtheta_i}$ defined as
\begin{equation}
    \label{eq:adagrad}
    g^{(t)}_{\evtheta_i} = g^{(t-1)}_{\evtheta_i} + \left(\frac{\partial \ell^{(t)}}{\partial \evtheta_i^{(t-1)}}\right)^2.
\end{equation}
Since elements in this summation are always positive, the learning rate keeps scaling down as the training goes on until when the optimization algorithm becomes unable to modify any parameter. To address this issue, Adadelta~\citep{zeiler2012adadelta} and RMSProp~\citep{tieleman2012lecture} were introduced with a slight difference. They altered \Eqref{eq:adagrad} such that instead of taking all of the previous optimization steps into account only a weighted average over a window of the steps is considered. 
Taking advantage of a hybrid strategy, Adam~\citep{kingma2014adam} integrated momentum with RMSProp and is known for notably speeding-up the convergence compared to SGD that only works with with momentum.

\subsubsection{Performance Gap}
Despite its merits, Adam has been largely criticized for asymptotically being outperformed by SGD with momentum~\citep{keskar2017improving, Reddi2018convergence}.
Dampening the gradients along the dimensions with a high frequency  \citep{zeiler2012adadelta, tieleman2012lecture, kingma2014adam} is found to gain over SGD only in the beginning of the training~\citep{keskar2017improving} which corresponds to moving across regions of the loss landscape with high altitudes. These regions are often more chaotic compared to regions closer to the minima~\citep{li2018visualizing}. Motivated by these facts, we will empirically show that in the case of fine-tuning on a similar task, the gap between the performance of Adam and SGD with momentum could be significantly reduced. This mainly comes from the fact that, unlike the traditional fine-tuning, optimizing the classification objective with FAST does not make parameters to step out of the proximity of $\vtheta_{\gT_2}^*$ (the minimum in the loss landscape of the target task) far onto the high altitudes.

\subsubsection{Initial Variance of the Error}
\cite{vaswani2017attention} and \cite{popel2018training} showed that a warmup phase for adaptive learning rate can significantly accelerate the training convergence.\footnote{This is different from the classifier warmup phase in which only the classifier is updated.} It is an initial training phase in which a very small learning rate is applied. \cite{liu2019variance} found the source of this phenomenon in the large initial variance of the gradients. They demonstrate this by showing that the distribution of the gradients has momentous changes in the first few optimization steps and addressed the issue by an optimization algorithm that dampens the variance accordingly. Similarly, \cite{luo2019adaptive} and \cite{zhang2019adam} proposed to set dynamic boundaries on the magnitude of the updates. \cite{keskar2017improving} proposed to start training with Adam to take advantage of the convergence speed and then switch to SGD with momentum to better generalize when converged. Our proposed optimization algorithm for fine-tuning also decreases the initial variance of the gradients and similarly accelerates the training convergence. 
Using FAST, neither the speed of convergence nor the generalizable performance at convergence are scarified. 
FAST finds its merits in controlling the velocity of applying updates
only via the first gate that gradients back-propagate through (e.g., the classifier layer). Although our work focuses only on classification, the analysis we provide suggests that in an already stable model (e.g., pretrained on a similar task, so at comparably low altitude regions of the loss landscape), SGD with momentum is able to show a competent convergence speed as long as the pretrained parameters are not disrupted by sudden and large-variance gradients, which can cause overshooting the minimum of the loss landscape that they aim to converge. It is worth mentioning that label smoothing~\citep{Szegedy_2016} is also relevant in the sense that it reduces the initial variance of the gradients by decreasing the L$_2$-norm of the true labels, while our method focuses on the norm of the predicted labels for which the magnitude could be automatically adapted during the course of training (unlike the norm of the true labels).

\subsubsection{Other Related Works on Optimization}
\citet{loshchilov2016sgdr} aim to improve the anytime performance while training DNN models
with a scheduling the value of the learning rate. Although using \textit{learning rate schedulers} have been shown to be a promising direction, we choose not to experiment with it to avoid introducing conflating factors into our study. 
Moreover, depicting the loss landscape in the other spaces rather than the space of all parameter have not been a common strategy in the deep learning community. Among the particular works that have employed this strategy,~\citet{yu2019interpreting} found visualizing the loss landscape in the input space to be more inline with their findings about \textit{adversarial vulnerability and robustness} of neural networks.
Furthermore, although our analysis 
about the connection between the minima in loss landscapes of two similar tasks is novel, depicting such connection between minima in the loss landscape of a single task has been already investigated. In this line of research, ~\citet{becker2020geometry} found the depth (number of layers) to be an indicator of how strided the minima are spread on the loss surface. ~\citet{draxler2018essentially} disputed the common perception about the location of minima of the loss landscape of CNN models at the bottom of distinct valleys and suggested that they are connected in a common low-altitude valley.

\subsection{Transfer Learning}
\label{sec:related_tl}
Although the literature is not unanimous about the exact concept for which \textit{transfer learning} should refer to, in its most widely known meaning it indicates borrowing knowledge from a learned task to learn a new one \citep{pan2009survey}. It usually is categorized into \textit{feature extraction} and \textit{fine-tuning}~\citep{li2018learning}, where the former means only the appended parameters are altered during the training on the target task, while the latter also considers adapting the pretrained parameters.
It is well-accepted among the community that fine-tuning could accelerate the training procedure compared to learning \textit{from scratch};\footnote{Learning \textit{from scratch} refers to a setting in which the parameters of the model are randomly initialized. In deep learning literature, learning from scratch sometimes, comes as the opposite case of transfer learning.} however, whether it also improves the outcomes~\citep{Kornblith_2019_CVPR, raghu2019transfusion} or they results asymptotically emerge~\citep{He_2019_ICCV} is still an open debate. More recently, the latter hypothesis got back the attention of the community since the models that exhibit state-of-the-art classification performance on many machine vision benchmarks are those that are pretrained on larger source data sets~\citep{wang2019easy, xie2019self}.

\subsubsection{Catastrophic Forgetting} 
Fine-tuning a pretrained neural network on a new task, greatly degrades the performance of the model on the source task. This phenomenon is widely known as \textit{catastrophic forgetting}  \citep{mccloskey1989catastrophic}. \cite{goodfellow2013empirical} empirically found that dropout \citep{hinton2014dropout} reduces catastrophic forgetting although at the cost of performing sub-optimal on the new task. 
\citet{kirkpatrick2017overcoming} introduced a way to spot the parameters that are more important for the source task and to assign a smaller learning rate to them. 
The practical importance of less forgetting is highlighted in a generic learning scenario so called \textit{continual learning}~\citep{ven2019scenarios, hsu2018re}. It consists of several learning sub-scenarios with real life applications that take into account incrementally learning new tasks~\citep{li2018learning}, domains~\citep{ven2019scenarios} or classes~\citep{rebuffi2017icarl}. Due to its practicality, continual learning has recently become more popular and is even studied combined with other learning scenarios~\citep{lao2020continuous}. We consider exploiting the knowledge preserving aspect of FAST in continual learning as a possible direction for the future works. However, in this paper we retain our focus on the classic transfer learning where interestingly, in contrast to the previous studies, our method does not
make an explicit effort to retain the performance of the source task and does not compromise the performance on the target task.
Our main goal is to accelerate the fine-tuning process on the target task without compromising its convergence performance and as a bi-product the source task is also forgotten less than in a typical fine-tuning procedure.

\subsubsection{Meta-learning}
\textit{Meta-learning} or \textit{learning to learn}~\citep{schmidhuber1993neural} is a generalization of the classic transfer learning, where the transferred knowledge is obtained from a collection of tasks instead of only one. It has been a while since meta-learning has become the leading research direction to tackle few-shot learning \citep{finn2017model, nichol2018first, li2017meta, mishra2018a}. However, recently, it has been shown that state-of-the-art meta-learning methods are not much superior compared to a more complex fine-tuning \citep{dhillon2019baseline} or even feature extraction \citep{chen2019a_closer}.

\subsubsection{Classifier Warmup}
\citet{li2018learning} considered a classification warmup phase before fine-tuning on a new target task. In this phase, the feature extractor works in its inference mode and only the parameters of the classifier are updated. This is similar to finding a better initialization for the classifier though not only the representation learning is postponed, the number of required updates in the classification warmup phase is not known. The optimal number depends on many factors, including the values drawn for $\mW^{(0)}$. \citet{li2018learning} used a validation set to determine when the classifier warmup phase should be ended. Our experiments suggest that, the transition from feature extraction to fine-tuning in this form causes a minimum overshooting, which immediately damages the model's performance (see Section~\ref{sec:exp_wup} for a more complete explanation). In contrast, our method does not demand manually switch between training phases. Instead, it chooses the initialization and the learning rate of $\mW$ such that the magnitude of the updates on the feature extractor's parameters become proportional to both the error and its certainty (through gradual entropy minimization).

\section{Experiments}
\label{sec:exp}
In this section, we first define our methodology to compare anytime performance of different models and then use it to quantitatively validate our hypothesis through different experiments. 

\subsection{Evaluation Methodology}
To evaluate different models or training procedures applied on a standard supervised task, three labeled partitions of data are commonly used. \textit{Training set} helps GD to move through the parameters' space. \textit{Validation set} is mainly used to monitor the generalizable performance during training. If the inferred performance (or another measure like loss) on validation set improves, the state of the model is stored, so it could be deployed when the training ends without any further progress. \textit{Test set} is used to compare the performance of different models deployed after training has ended. In this scheme, validation set compensates for the non-monotonic trend of outcomes that GD may show during training; however, the speed of achieving a certain level of generalizable performance is disregarded and the judgement is made only based on the final test results. 
To better justify training procedures that guide the model through trajectories in the parameters' space with better outcomes during the course of training, the deployment results could simply be monitored at a pre-defined set of deployment checkpoints. This sounds more inline with the train-deploy cycle through which a human being usually learns to perform a single task.

There have been a lot of research works on speeding-up the convergence of neural networks, yet only a few have shown their gain in a quantitative way (e.g., ~\citet{raghu2019transfusion}). Some DNN classifiers have been viewed and analyzed in the context of anytime algorithms but often in terms of~\textit{anytime prediction} which is concerned about the inference time \citep{hu2019learning, huang2017multiscale, Amthor_2016, wan2019alert}---that is, the performance at inference time could be varied based on the time and energy constraints. In contrast, we introduce a new quantitative way to show how much the learning process of a model is sped-up at certain deployment checkpoints, and exploit it to compare our proposed methods with the baselines. \textit{Anytime Deployment Performance (ADP)} is the term we use to indicate the results that are collected in this way.
ADP reflect the stability of the generalizable results in terms of an anytime algorithm, which tolerates the interruption of the training at almost anytime. 
We define ADP simply to be the test performance on the best saved model until certain checkpoints which are pre-determined by a sequence of the number of training steps.
For instance, let us assume that during the training, the state of the model is saved exactly after $t$ training updates because evaluating it on a validation set at that step shows the best validation performance up to $t$. If the smallest deployment checkpoint that is equal or larger than $t$ is $\tau$, and from $t$ to $\tau$ the validation performance does not improve, then the state of the model that was stored right after the $t$-th update is retrieved at $\tau$ and its performance is measured on the test set. These measures are reported as the ADP. 

\subsection{Effect of Classifier Initialization}
\label{sec:exp_init}
In this experiment, we aim to validate our hypothesis about the influence of carelessly initializing the parameters appended through model adoption using the concept of catastrophic forgetting. Let $\gT_s$ and $\gT_t$ be the source and target tasks respectively; that is, the model is already trained on $\gT_s$ and the goal is to fine-tune it on $\gT_t$. We want to show that for currently well-known fine-tuning techniques, at the beginning of the training, the performance of the model on $\gT_s$ suddenly drops without much gain in its performance on $\gT_t$. Additionally, we want to see how forgetting to perform $\gT_s$ would impact the speed of convergence on $\gT_t$.

\subsubsection{Setup}
To cover engagement of features in different levels of representations, we consider two scenarios: first, choosing $\gT_s$ and $\gT_t$ to be identical and second, choosing them to be unrelated and from completely distinct domains.
A \textsc{ResNet-18}~\citep{He_2016_resnet} that is already pretrained on ImageNet's \textsc{ILSVRC-12}~\citep{russakovsky2015imagenet} is used for this experiment. Five percent of the training split of each task is separated for validation in a stratified order. Training, validation, and test images are resized to 128$\times$128. Random horizontal flip is applied on images of the training mini-batches. Size of each training mini-batch is 100 ($M=100$) with equal number of images per label. SGD is used with learning rate of 0.01 and gradient momentum equal to 0.9. 
The number of gradient updates (steps) between two consecutive validations is increased gradually starting from 1 and saturating at 10. ADP checkpoints are set to the update numbers corresponding to the \textit{Fibonacci sequence}, except for the third number which is skipped because it is equal to the second number in the sequence.

The source task is chosen to be classification defined over \textsc{CIFAR-100} data set \citep{krizhevsky2009learning}. \textsc{CIFAR-100} includes 32$\times$32 labeled RGB images of natural objects from 100 classes with 500 training and 100 test examples per class. $\mW$ is initialized according to the fan-in mode of Kaiming's method \citep{He_2015_ICCV} and trained for 120 epochs.
In the first scenario, the target task is chosen to be identical to the source task, and as usually is done in fine-tuning, in the transition from $\gT_s$ to $\gT_t$ the pretrained classifier is replaced with a random one. The second scenario, chooses $\gT_t$ to be the classification defined over \textsc{MNIST} data set~\citep{lecun1998mnist}. \textsc{MNIST} consists of 28$\times$28 gray-level labeled images of handwritten digits with 60000 training and 10000 test examples. Multiple sets of models are fine-tuned, initializing $\mW$ by drawing from zero-centered normal distributions with different standard deviations ranging from 1 down to $10^{-8}$ for each set of models.
Normal distribution is shown with $\mathcal{N}$ which takes its mean and standard deviation as its first and second arguments respectively. For example, to show that $\mW$ is initialized with zero-centered normal distribution that has a standard deviation equal to 1, we note $\pmb{W}^{(0)}\sim\mathcal{N}(0,1)$.
Each experiment is repeated using a set of 15 different random seeds. In our setup, applying different random seeds not only initializes $\mW$ differently but also exposes dissimilar sequence and combination of training mini-batches to the model.\footnote{To study further about the influence of initialization and the order of mini-batches see \citet{mccoy2020does}.}

During fine-tuning, the model is deployed on both the source and target tasks at determined checkpoints (pseudo-Fibonacci sequence). To deploy the model on the source task in this stage, the test split of the source task is first fed to the feature extractor in its inference mode, then it is passed through the classifier that was trained on the source task and has been detached through the model adoption process.

\subsubsection{Results and discussion}
Scaling down the magnitude of the initial values of $\mW$ not only increases the speed of convergence to the target task, but also retains more knowledge from the source task. This is concluded from the ADP performance of these tasks shown in Figure \ref{fig:init_vs_forget_c100}. Although the rising $||\mW||_F$ in some curves shown in Figure \ref{fig:init_vs_forget_c100_w} indicates that $\vtheta$ is considerably modified (unlike feature extraction where $\beta=0$) after just a few steps, the corresponding deployment performance of the source task plotted in the second row of Figure \ref{fig:init_vs_forget_c100} suggests that almost no forgetting happens when $\gT_s = \gT_t$ and $\mW^{(0)} \longrightarrow 0$. 

\begin{figure}[t]
\centering
    \def\svgwidth{15cm}
    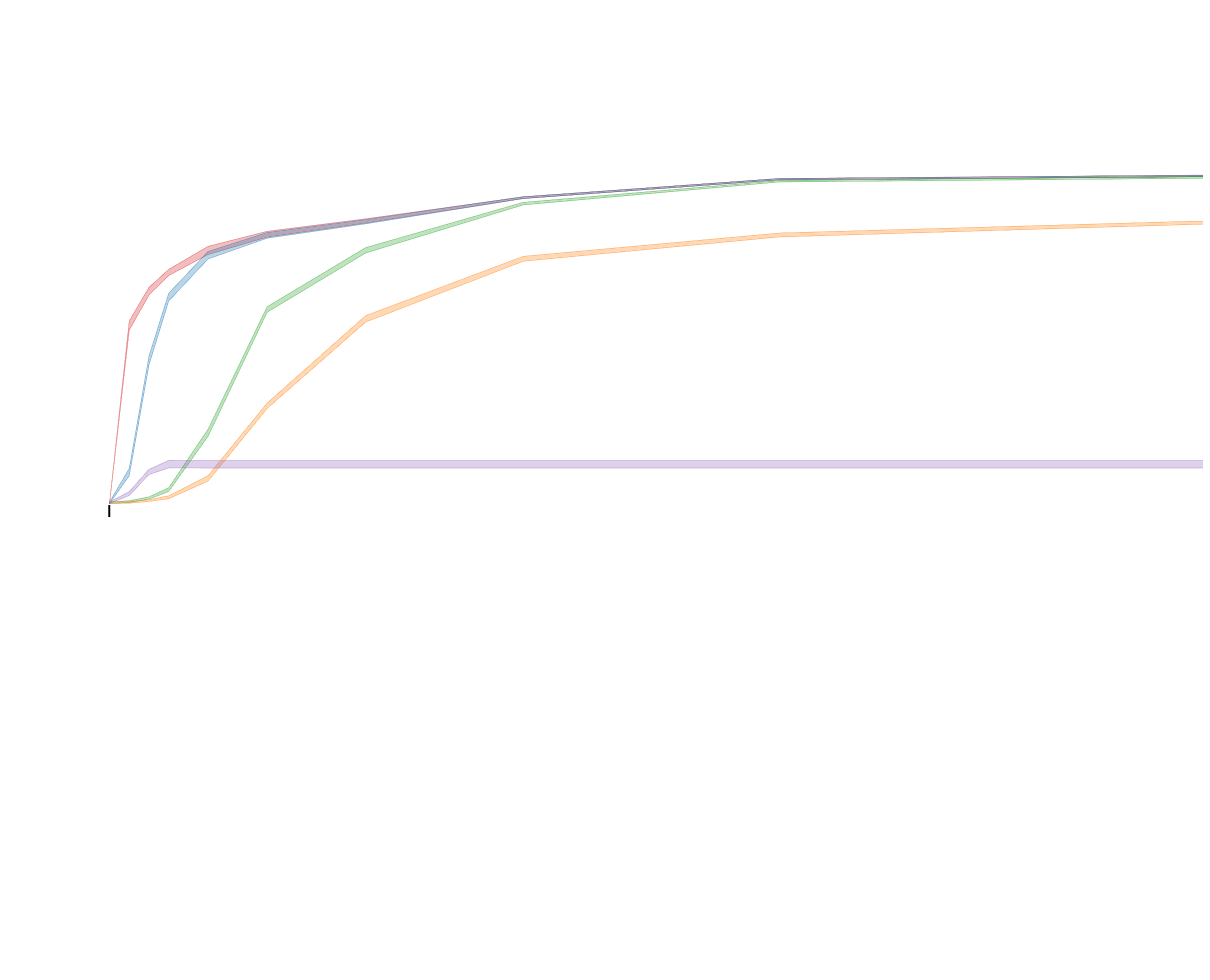
    \caption[The effect of initializing the classifier parameters]{Understanding the effect of initializing $\mW$ through catastrophic forgetting. 
    ADP accuracy on $\gT_t$ at the top and $\gT_s$ at the bottom. Both tasks are identical and defined by classification on \textsc{CIFAR-100} data set.
    The horizontal axis shows the optimization steps on the target task which is shared among the two plots.
    }
    \label{fig:init_vs_forget_c100}
\end{figure}

\begin{figure}[t]
    \centering
    \def\svgwidth{14.6cm}
    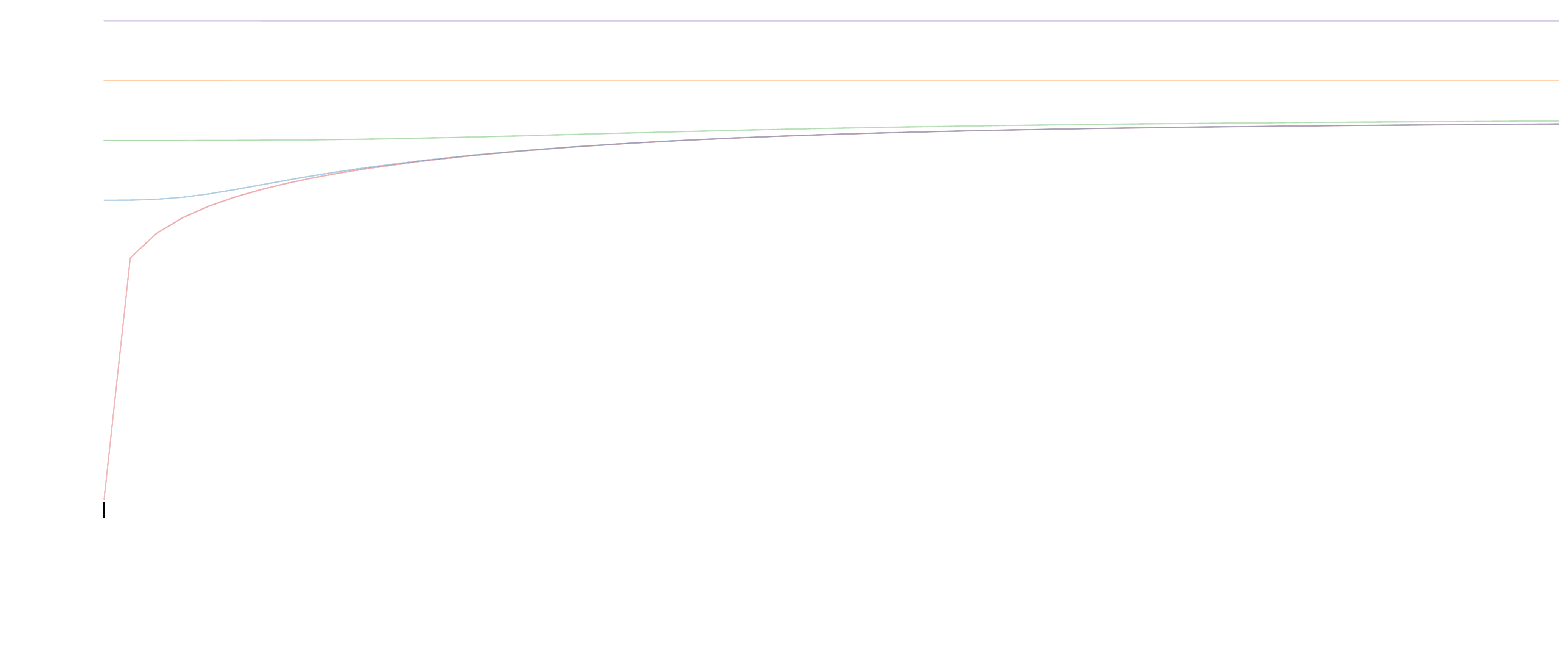
    \caption{The development of $log(||W||_F)$ corresponding to fine-tuning on $\gT_t$ shown in Figure \ref{fig:init_vs_forget_c100}.}
    \label{fig:init_vs_forget_c100_w}
\end{figure}

Figure \ref{fig:init_vs_forget_mnist} outlines the results of the second scenario ($\gT_s \neq \gT_t$). Notice that according to these results, when $\mW^{(0)}$ is closer to zero, within only 50 SGD steps (visiting total 5000 input images), ADP of $\gT_t$ reaches over 96\% accuracy while showing 20\% ADP accuracy on $\gT_s$. This is notable, considering the target task being in a totally different domain and the source task having 100 classes (compared to the expected accuracy from random outputs for a balanced task at 1\%). 

Rooted from the adverse effect of retaining the condition stated in \Eqref{eq:initi_gg} for a long time and its connection with curves in Figure \ref{fig:init_vs_forget_c100_w} and their corresponding ADP results in Figure \ref{fig:init_vs_forget_c100}, we conclude that \textbf{initializing the classifier appended through model adoption with close-to-zero values, not only better preserves the transferred knowledge but also can accelerate the training procedure on the target task}.

\begin{figure}[t]
\centering
    \def\svgwidth{15.0cm}
    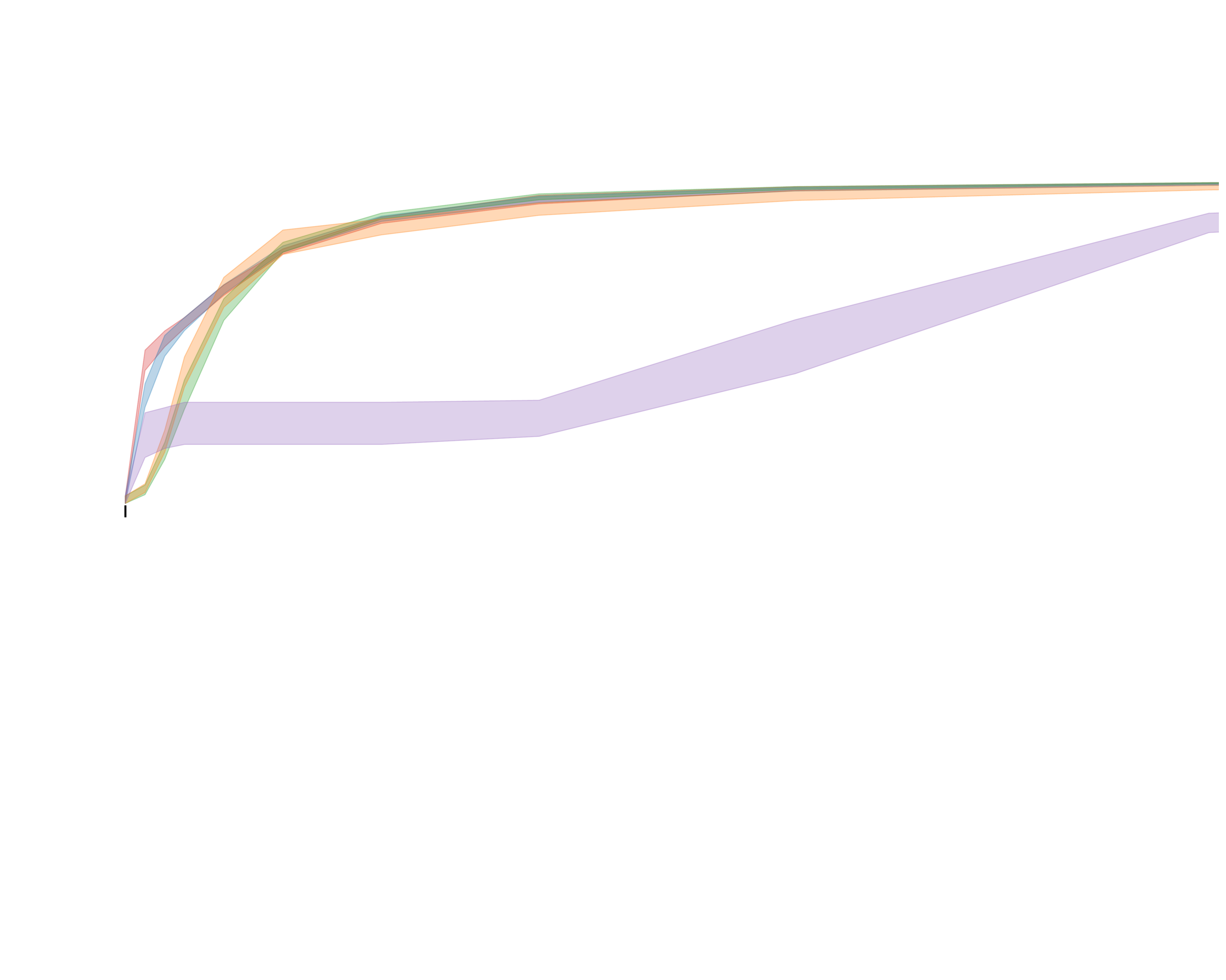
    \caption{Understanding the effect of initializing $\mW$ through catastrophic forgetting. ADP accuracy on $\gT_t$: \textsc{MNIST} at the top and $\gT_s$: \textsc{CIFAR-100} at the bottom.
    The horizontal axis shows the optimization steps on the target task which is shared.}
    \label{fig:init_vs_forget_mnist}
\end{figure}

\subsection{Effect of Parameter Heterogeneity}
\label{sec:exp_fix_space}
It is already explained analytically (in Section \ref{sec:method_init}) and empirically (in Section \ref{sec:exp_init}) that why initializing $\mW$ with large and random values, misleads the optimization algorithm to take large and aimless steps in $\gS(\vtheta)$ at the beginning of the fine-tuning. Also we showed that these steps cause the model to  forget the transferred knowledge and consequently slow down the convergence. One interesting question is how much of this damage comes from the randomness of $\mW$ apart from its magnitude.
We try to answer this question in this experiment and partly in the next one.

Geometrically, for the period that \Eqref{eq:initi_gg} holds, the landscape of the target task's loss does not deform much in $\gS(\vtheta)$ (the target minimum in Figure \ref{fig:space_classic} stays almost hold for that period). In this situation, if $\mW^{(0)}$ is carelessly drawn, it is very likely that the trajectory through which SGD guides $\vtheta$, becomes long and hard to settle. To further pronounce this phenomenon, 
by setting $\alpha$ to zero, we force the optimization algorithm to keep $\mW$ as it is initialized; hence, the convergence only relies on $\vtheta$ traveling in $\gS(\vtheta,\ell)$. This makes us able to abstractly perceive the effect of the fast geometric deformation that FAST provides.

\subsubsection{Setup} We compare ADP when randomly-initialized $\mW$ is kept unchanged ($\alpha=0$) during fine-tuning compared to when it is updated with the same rate as $\vtheta$ ($\alpha=\beta$). For both cases we choose $\mW^{(0)} \sim \mathcal{N}(0, 10^{-2})$. The rest of the setup is identical to the first scenario of the previous experiment except that fine-tuning is done on \textsc{CIFR100} as the target task ($\gT_{2}$) by adopting an off-the-shelf model pretrained on ImageNet's \textsc{ILSVRC-12}~\citep{russakovsky2015imagenet} ($\gT_{1}$).

\subsubsection{Results and discussion}
Figure \ref{fig:hetero} suggests that when the loss landscape deforms as discussed in Section~\ref{sec:geometric}, the convergence is significantly accelerated. The shadows representing 95\% confidence intervals are barely visible indicating that the results shown in this figure are highly consistent. The results in this experiment strongly support our hypothesis which recognizes \textbf{the deformation of the loss landscape in the space of the transferred parameters to largely influence the convergence speed}.

\begin{figure}[t]
\centering
    \def\svgwidth{15.0cm}
        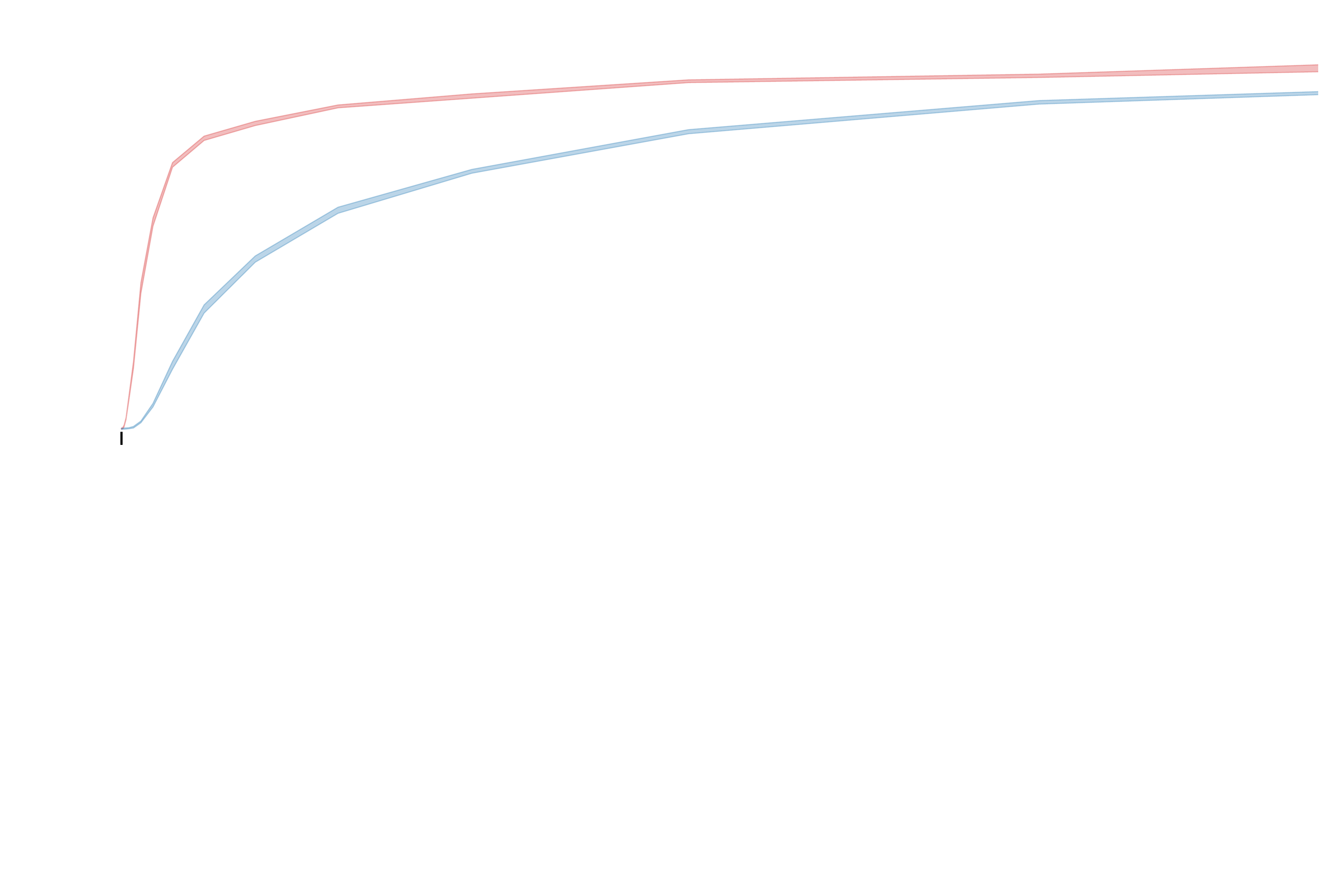
    \caption{The effect of preventing the loss landscape to deform in $\gS(\vtheta)$ during fine-tuning.}
    \label{fig:hetero}
\end{figure}

\subsection{Classifier Warmup}
\label{sec:exp_wup}
Introduced by \citet{li2018learning}, fine-tuning could be done after a classifier warmup phase during which only the classifier is modified. The transition from this phase to fine-tuning is committed whenever the validation performance does not improve for a number of training steps. In this experiment we show how using classifier warmup effects the generalizable performance while training.

\subsubsection{Setup}
The warmup phase is stopped whenever the validation accuracy has not improved at least one percent for 10 consecutive SGD steps. The rest of the setup is akin to that of Section~\ref{sec:exp_fix_space}.

\subsubsection{Results and discussion}
Large leaps of validation accuracy pointed with arrows in the first row of Figure \ref{fig:exp_wup} show the step at which the transition from classifier warmup phase to jointly training $\vtheta$ and $\mW$ is committed. At these points, the back-propagate gradients have comparably large magnitude which makes SGD take a big (but unnecessary) step in $\gS(\vtheta)$. The size of this step depends on $||\mW||_F$ which in turn depends on $\mW^{(0)}$ and $\alpha$. In this experiment, we use $\alpha=10^{-2}$ but initializing the classifier in different ways. The orange and green curves correspond to the fan-out and fan-in modes of Kaiming initialization \citet{He_2015_ICCV} respectively. The blue curves shows the effect of letting the running statistics---used in normalization layers of feature extractor---to be updated during the training of classifier. Similarly, the running statistics are updated for the case shown with red curves but the classifier is initialized with small-variance values in this case. The reason for including cases with updating running statistics is to opt-out its influence factor and make more certain conclusions.

Unlike other experiments, the visualization made for this experiment only reflects a single random seed (no shadow is plotted to indicate confidence interval) and the outcomes of runs with other random seeds are not shown so to let the minimum overshooting leaps be visually clear. The optimization steps where sudden leaps in the performance of different curves takes place in Figure \ref{fig:exp_wup}, support our geometric hypothesis visualized in Figure \ref{fig:space_wup}. In summary, the results presented in this experiment suggest that \textbf{although including a classifier warmup phase can help the optimization algorithm to take its initial steps in a correct direction, it still is prone to minimum overshooting.} In the sever case of minimum overshooting, $\vtheta$ may step into high-altitude regions of the loss landscape which makes the convergence non-efficient and can cause catastrophic forgetting. 

\begin{figure}[t]
    \centering
    \def\svgwidth{15.0cm}
    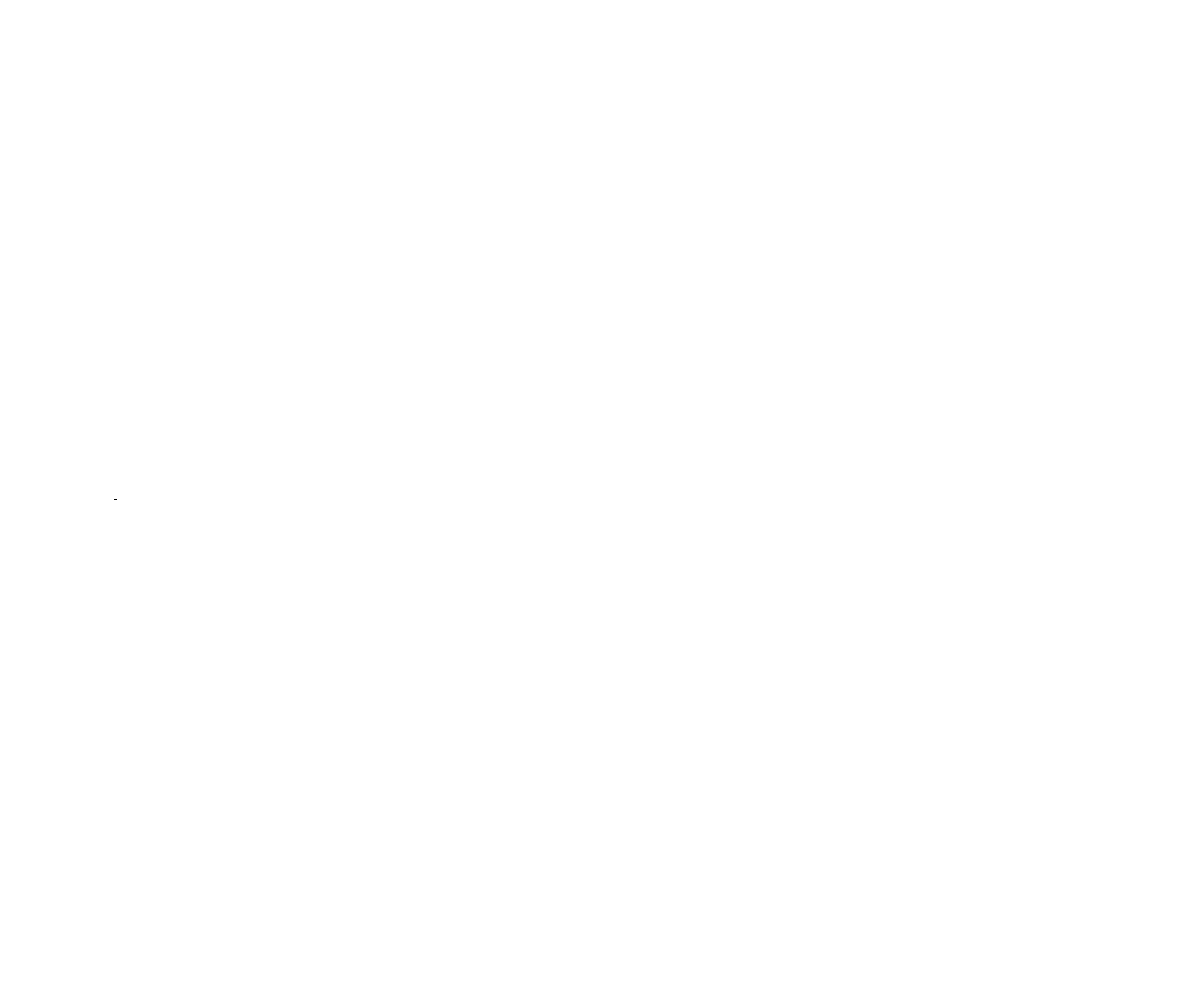
    \caption{Classifier warmup is still prone to overshooting the minimum. The plots show the validation performance and ADP from top to bottom.}
    \label{fig:exp_wup}
\end{figure}

\subsection{The Effect of Unequal Learning Rates}
\label{sec:exp_lr}

As shown in Section~\ref{sec:method_lr}, the effect of $\alpha$ on the relative velocity of $\vtheta$ and $\vtheta_{\gT_t}^*$ is much larger than that of $\beta$. This implies that for an efficient convergence rate, the learning rates of the feature extractor and the classifier should be adjusted independently or at least should not necessarily be set equally as is traditionally done. Considering~\Eqref{eq:recursive_w_fc}, this is even more pronounced when $\mW$ is initialized with close to zero values.  
In this experiment we compare the outcome of scaling the learning rates in symmetric and asymmetric ways. To do so, starting from a baseline, we scale-up and scale-down $\alpha$ and $\beta$ both equally and unequally, and inspect the outcomes.

\subsubsection{Setup}
In this experiment we use \textsc{ResNet-18}~\citep{He_2016_resnet} and \textsc{VGG-19}~\citep{Simonyan2014vgg} which are pretrained on ImageNet's \textsc{ILSVRC-12}~\citep{russakovsky2015imagenet} to fine-tune on the classification task defined over \textsc{CIFAR-100} data set~\citep{krizhevsky2009learning}. 
For all the cases, $\mW$ is initialized randomly from a zero-centered normal distribution with a close to zero standard deviation ($10^{-8}$). The rest of the setup is akin to that of Section~\ref{sec:exp_fix_space} except that the learning rates for SGD optimization are explicitly expressed in plots on each curve.

\subsubsection{Results and discussion}
Figure~\ref{fig:lr_sym_asym} compares the top-1 ADP accuracy (top row) and the norm of $\mW$ (bottom row) when the learning rates are scaled in a symmetric way (Figure~\ref{fig:lr_res18_sym}) versus in an asymmetric way (Figure~\ref{fig:lr_res18_ass}) for the \textsc{ResNet-18}~\citep{He_2016_resnet}. The blue curves are identical in this Figure and are just repeated so can easily be compared to the other curves. Symmetrically scaling down the learning rates shown by the red curve on Figure~\ref{fig:lr_res18_sym} improves the ADP eventually, though it degrades the performance in the first 100 optimization steps. On the other hand, as shown by the red curve on Figure~\ref{fig:lr_res18_ass}, if only $\beta$ is scaled down, the best performance is similarly improved but is not compromised on the earlier steps at all. 

For \textsc{VGG-19}~\citep{Simonyan2014vgg} which compared to \textsc{ResNet-18}~\citep{He_2016_resnet} is a more challenging model to train~\citep{li2018visualizing}, the difference between the aforementioned learning-rate scaling strategies seems to be bolder. This is depicted in Figure~\ref{fig:lr_vgg19} where the asymmetric scaling more significantly speeds-up the ADP performance. 

The results in this experiment indicate that in order to have an efficient ADP,~\textbf{the learning rates of the pretrained feature extractor and appended classifier are better to be tuned separately}. This statement does not intend to convey that, these learning rates do not influence the optimal value of each other but rather it emphasizes that the prevailing practice to set them equal could compromise the performance either at the beginning of the training or for a long time afterward. Furthermore, the compromise seems to be more significant as much as the loss landscape is more chaotic (e.g., \textsc{VGG-19} compared to \textsc{ResNet-18} as explained by~\citet{li2018visualizing}).

\begin{figure}[t]
\centering
    \begin{subfigure}{0.52\textwidth}
        \centering \def\svgwidth{7.9cm} 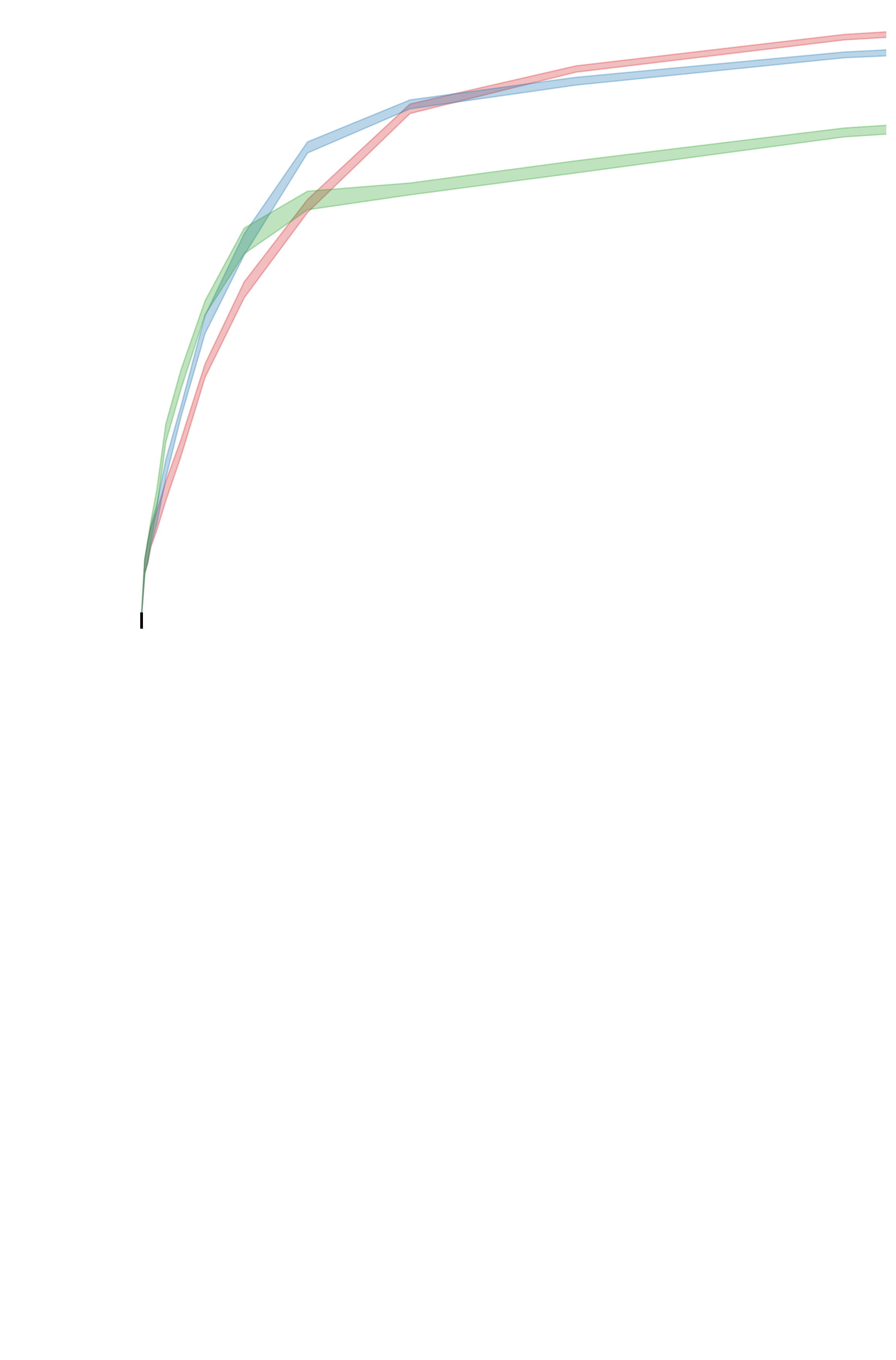
        \caption{Symmetric scaling}
        \label{fig:lr_res18_sym}
    \end{subfigure}
    \begin{subfigure}{0.47\textwidth}
        \centering
        \def\svgwidth{7.2cm}
        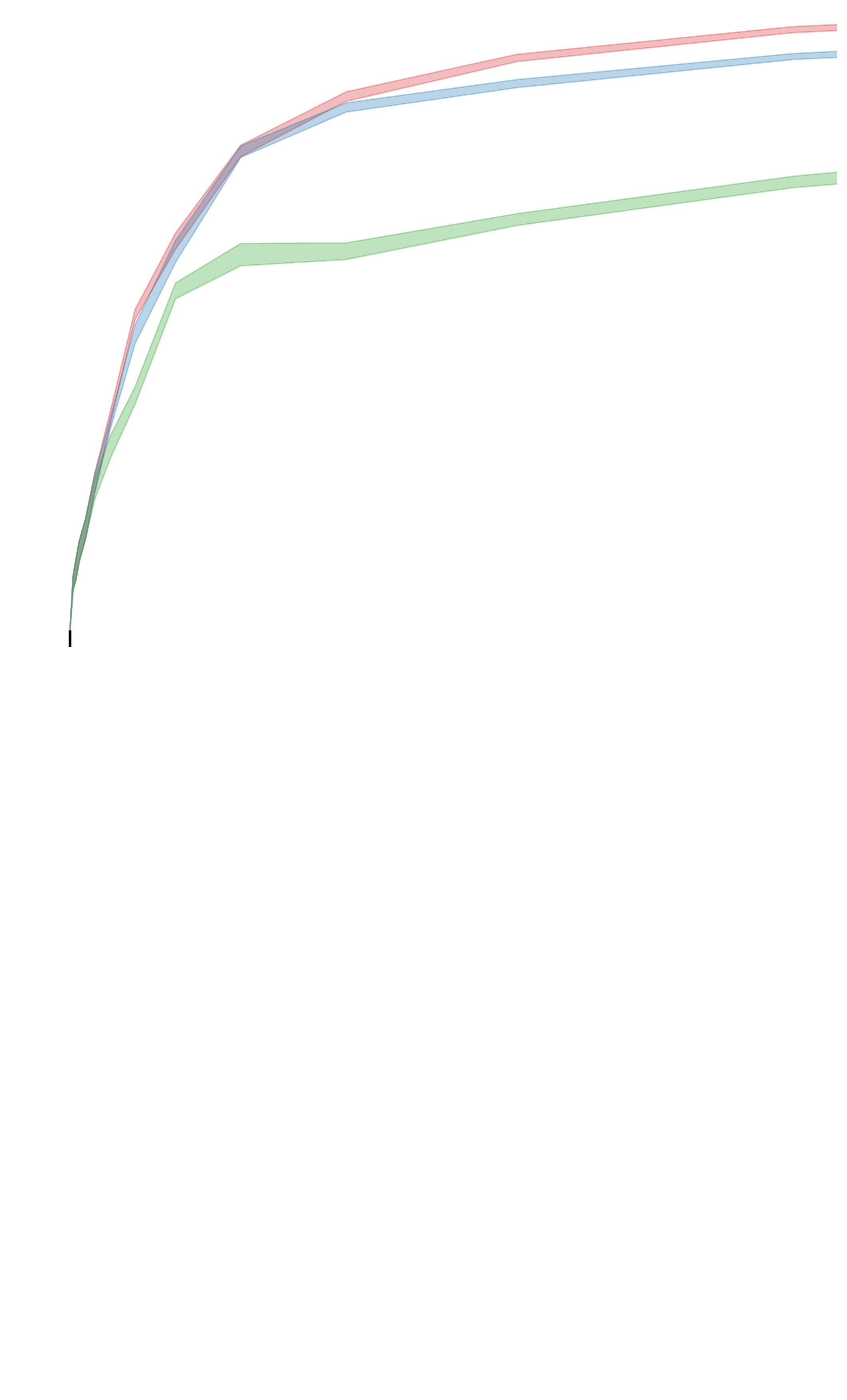
        \caption{Asymmetric scaling}
        \label{fig:lr_res18_ass}
    \end{subfigure}
    \caption{Comparing the effect of scaling the learning rate in (a) symmetrically versus (b) asymmetrically for fine-tuning on \textsc{ResNet-18}~\citep{He_2016_resnet}. In all cases $c=10^{-2}$.}
    \label{fig:lr_sym_asym}
\end{figure}

\begin{figure}[t]
    \centering
    \def\svgwidth{15.0cm}
    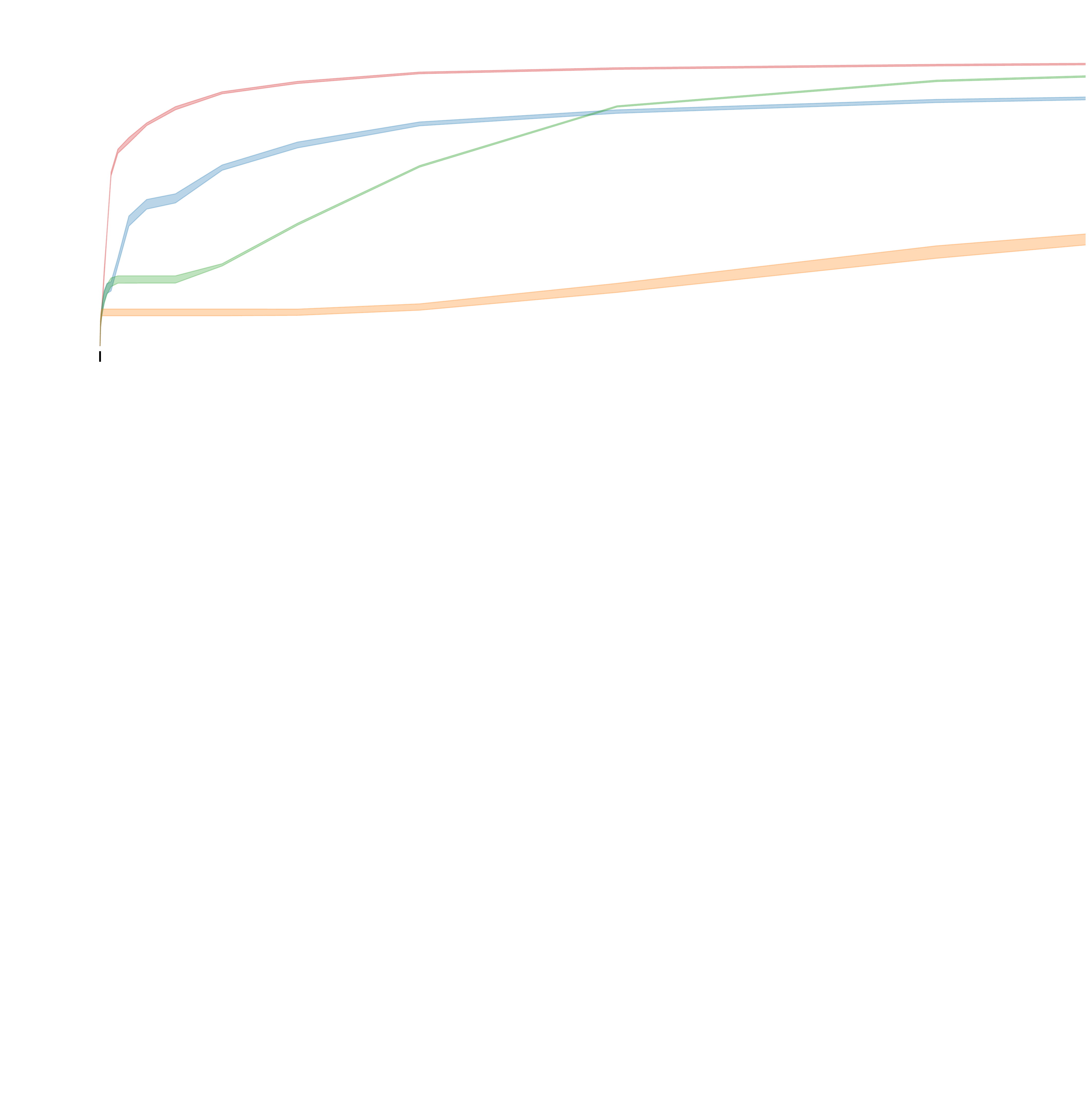
    \caption{Comparing the effect of scaling the learning rate in (a) symmetrically versus (b) asymmetrically for fine-tuning on \textsc{VGG-19}~\citep{Simonyan2014vgg}. In all the cases $c$ is equal to $0.05$ and $\mW^{(0)} \sim \mathcal{N} (0, 10^{-8})$.}
    \label{fig:lr_vgg19}
\end{figure}

\subsection{Optimization}
\label{sec:exp_opt}
Putting the outcomes of Sections~\ref{sec:exp_init} and~\ref{sec:exp_lr}, we want to compare the complete FAST method with a fine-tuning baseline. However, we first need to answer two fundamental questions. 
\begin{itemize}
    \item First, {are the outcomes of employing FAST essentially different from those of the baseline} or similar convergence behavior could be obtained by tuning the learning in the baseline?
    \item Second, {does FAST reduce the performance gap between SGD with momentum and Adam as hypothesized} (see Section~\ref{sec:related_opt})?
\end{itemize}
In this experiment we look for answers to these question from an empirical point of view.

\subsubsection{Setup}
In this experiment we use~\textsc{VGG-19}~\citep{Simonyan2014vgg} as our challenging optimization case in addition to \textsc{DenseNet-201}~\citep{Huang_2017} which according to what~\citet{li2018visualizing} states about the skip connections, is characterized with a relatively smooth loss landscape despite having a lot of layers. Both models are pretrained on ImageNet's~\textsc{ILSVRC-12}~\citep{russakovsky2015imagenet} and the goal is to fine-tune them on the classification task defined over~\textsc{CIFAR-100} data set~\citep{krizhevsky2009learning}. The learning rates and initialization setup for FAST and the baselines are explicitly stated on the plots. Wherever Adam optimization algorithm~\citep{kingma2014adam} is used, its exponential decay rates for estimating the first and the second moments are set to 0.9 and 0.999 respectively (the default recommended by the original paper). The results are also verified for RAdam algorithm which is introduced more recently by~\citet{liu2019variance}. The definition of RAdam is directly borrowed from the implementation that the authors provided and the default hyper-parameters are preserved.

\subsubsection{Results and discussion}
Similar to Section~\ref{sec:exp_lr}, when the unified learning rate is scaled down or scaled up for the baselines, in this experiment, the ADP performance is either compromised at the beginning of fine-tuning or for a large number of steps afterward. In other words, optimizing the baselines with larger learning rates improves the performance at the beginning of fine-tuning but holds it back from reaching its potential performance thereafter. However, as it is shown in Figures~\ref{fig:optim_vgg_sgd} and~\ref{fig:optim_dense_sgd}, employing FAST provides with achieving a non-compromised ADP. Figures~\ref{fig:optim_vgg_adam} and~\ref{fig:optim_dense_adam}, have the same message as that of Figures~\ref{fig:optim_vgg_sgd} and~\ref{fig:optim_dense_sgd} except that all the models corresponding to the curves in these Figures use Adam optimizer. The baseline curves are marked with f/i which refers to the fan-in mode of Kaiming~\citep{He_2015_ICCV} method used for initializing the appended parameters.\footnote{In these cases, since $D > N$, choosing the fan-out mode of Kaiming method to initialize $\mW$ results in worse performance than that of the fan-in mode}

A rough comparison between Figures~\ref{fig:optim_vgg_sgd} and~\ref{fig:optim_vgg_adam}, clearly reveals that Adam substantially speeds-up the convergence of the baselines compared to SGD with momentum. However, applying Adam on top of FAST does not make a huge difference. To make it easier to compare the cases that use FAST in Figures~\ref{fig:optim_vgg_sgd} and~\ref{fig:optim_vgg_adam}, we plotted the same corresponding curves again in Figure~\ref{fig:optim_vgg_fast}. We also compared the aforementioned optimization algorithms that use FAST with a case in which Nesterov momentum~\citep{nesterov1983method} is used instead of the simple momentum in SGD. Also notice that as expected, Adam fails to get on par with SGD with momentum after a long period of training but generally the performance gap comparably looks minor during the whole training period.

Finally, Figure~\ref{fig:optim_dense_radam} compares FAST with the baselines when RAdam optimization algorithm~\citep{liu2019variance} is employed. Unlike the original setup used in ~\citet{liu2019variance}, in our fine-tuning setting the convergence rate of RAdam seems not to be less sensitive to the choice of the learning rate. Moreover, comparing Figure~\ref{fig:optim_adam} with Figure~\ref{fig:optim_radam} suggests that Adam converges faster than RAdam for the settings used in this experiment.

In summary, FAST could be used along with advance optimization algorithms to improve the ADP performance. However,~\textbf{when FAST is used for fine-tuning a model pretrained on a source task on a similar target task, the gap between performance obtained from different optimization algorithms is notably decreased}. This supports our hypothesis which suggests that FAST prevents the parameters to suddenly step out of the proximity of $\vtheta_{\gT_s}^*$ and $\vtheta_{\gT_t}^*$---Supposing $\vtheta_{\gT_s}^*$ and $\vtheta_{\gT_t}^*$ lay close to each other in a smooth region on $\gS(\vtheta, \ell)$ due to similarity of ${\gT_s}$ and ${\gT_t}$. 

\begin{figure}[p]
\centering       
    \begin{subfigure}{1.0\textwidth}
        \centering
        \def\svgwidth{14cm}
        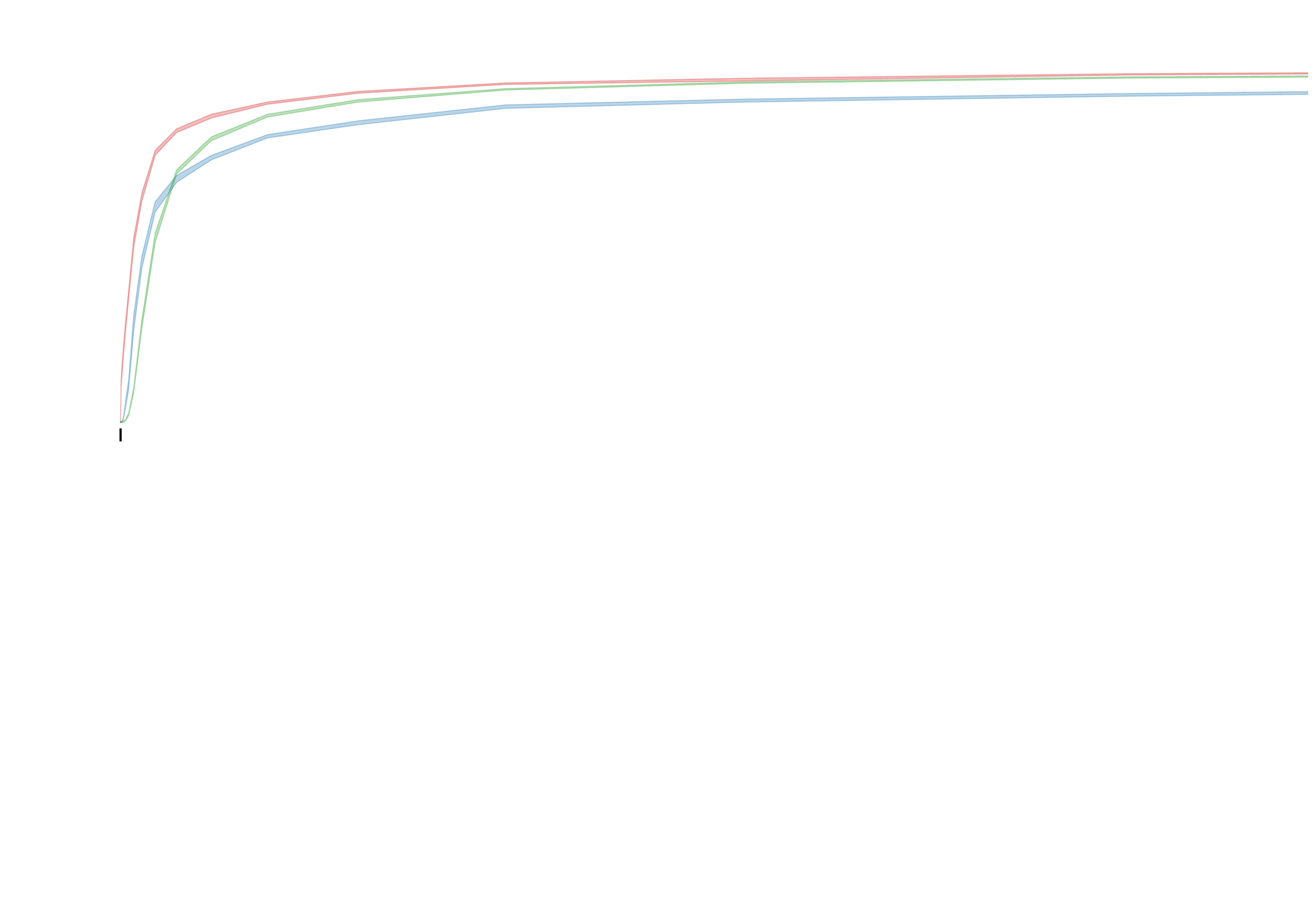
        \caption{\textsc{DenseNet-201}, $c=1\times10^{-2}$}
        \label{fig:optim_dense_sgd}
    \end{subfigure}
    \begin{subfigure}{1.0\textwidth}
        \centering
        \def\svgwidth{14cm}
        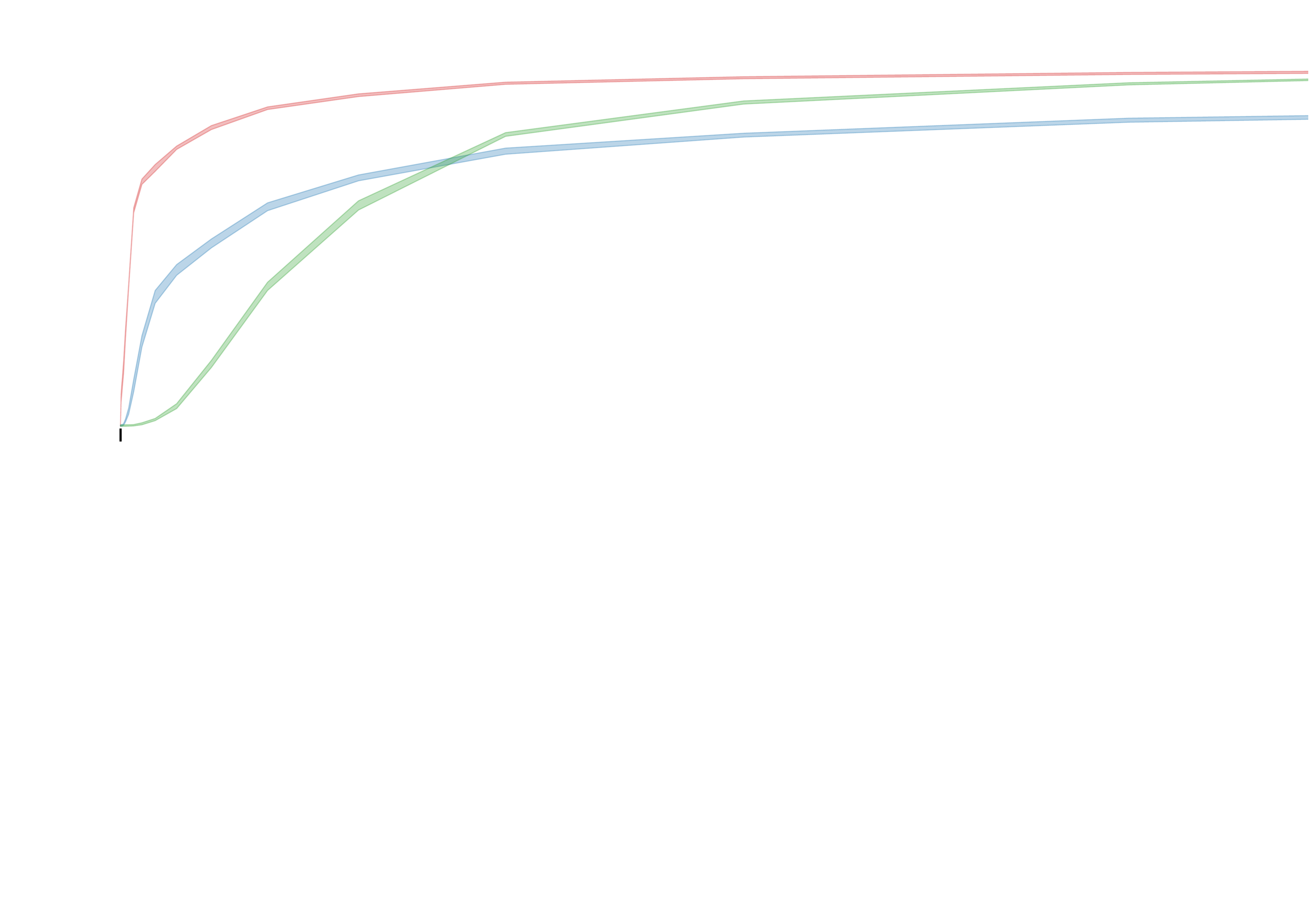
        \caption{\textsc{VGG-19}, $c=5\times10^{-2}$}
        \label{fig:optim_vgg_sgd}
    \end{subfigure}
    \caption{Comparison between FAST and the baselines in their learning progress when SGD with momentum is used as the optimization algorithm.}
    \label{fig:optim_sgd}
\end{figure}

\begin{figure}[p]
\centering       
    \begin{subfigure}{1.0\textwidth}
        \centering
        \def\svgwidth{14cm}
        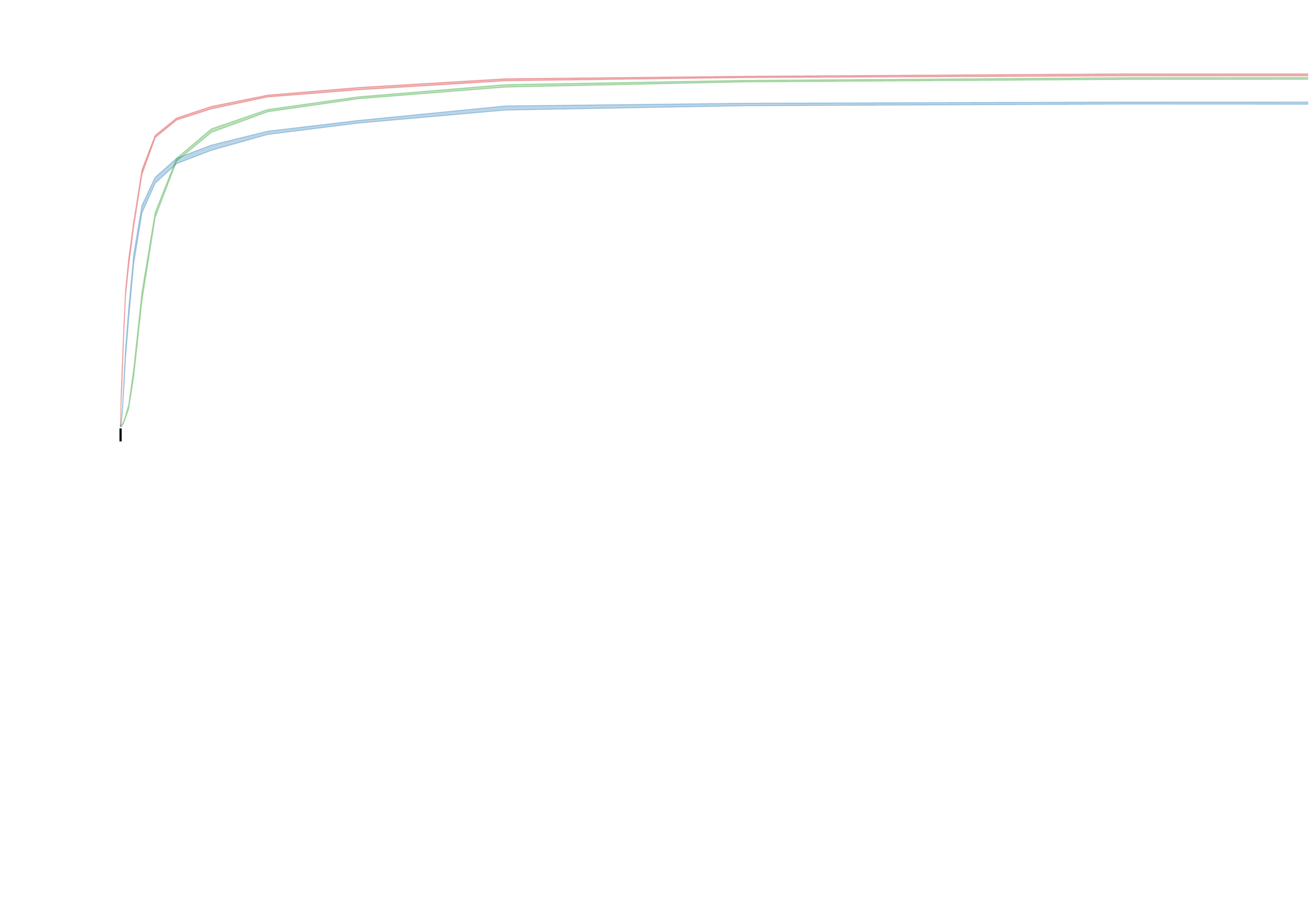
        \caption{\textsc{DenseNet-201}, $c=5\times10^{-4}$}
        \label{fig:optim_dense_adam}
    \end{subfigure}
    \begin{subfigure}{1.0\textwidth}
        \centering
        \def\svgwidth{14cm}
        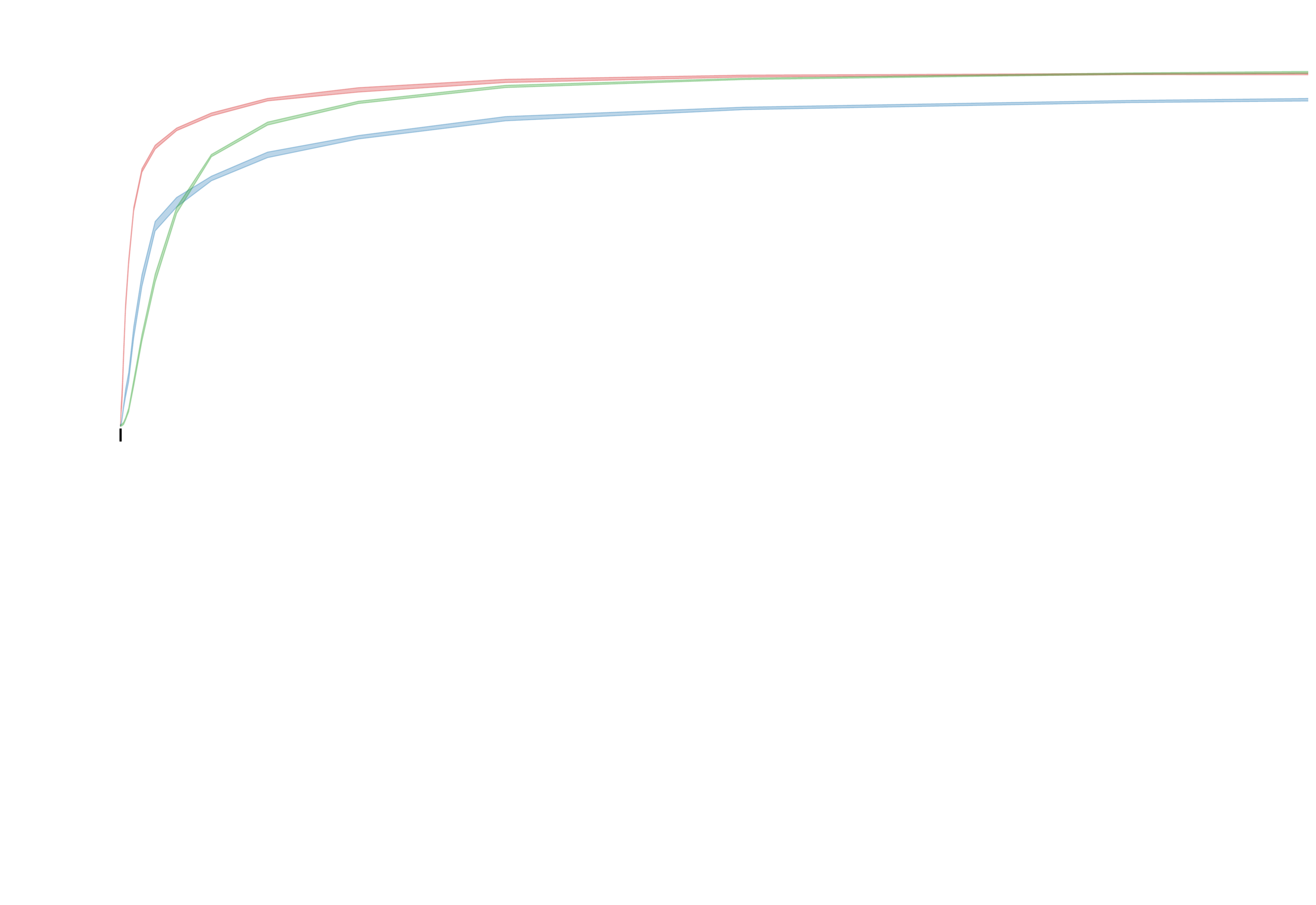
        \caption{\textsc{VGG-19}, $c=5\times10^{-4}$}
        \label{fig:optim_vgg_adam}
    \end{subfigure}
    \caption{Comparison between FAST and the baselines in their learning progress Adam is used as the optimization algorithm.}
    \label{fig:optim_adam}
\end{figure}

\begin{figure}[p]
\centering       
    \begin{subfigure}{1.0\textwidth}
        \centering
        \def\svgwidth{14cm}
        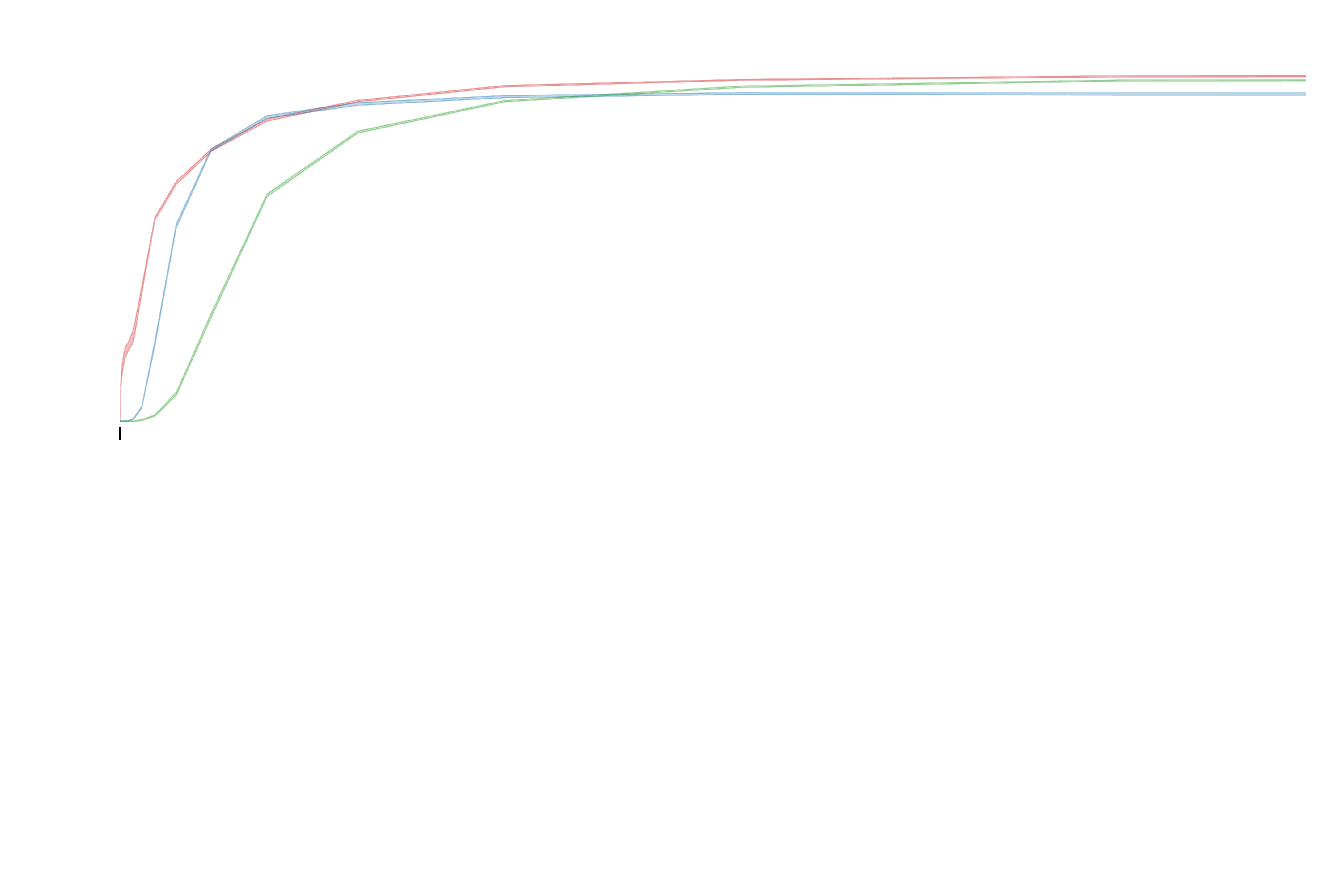
        \caption{\textsc{DenseNet-201}, $c=5\times10^{-4}$}
        \label{fig:optim_dense_radam}
    \end{subfigure}
    \begin{subfigure}{1.0\textwidth}
        \centering
        \def\svgwidth{14cm}
        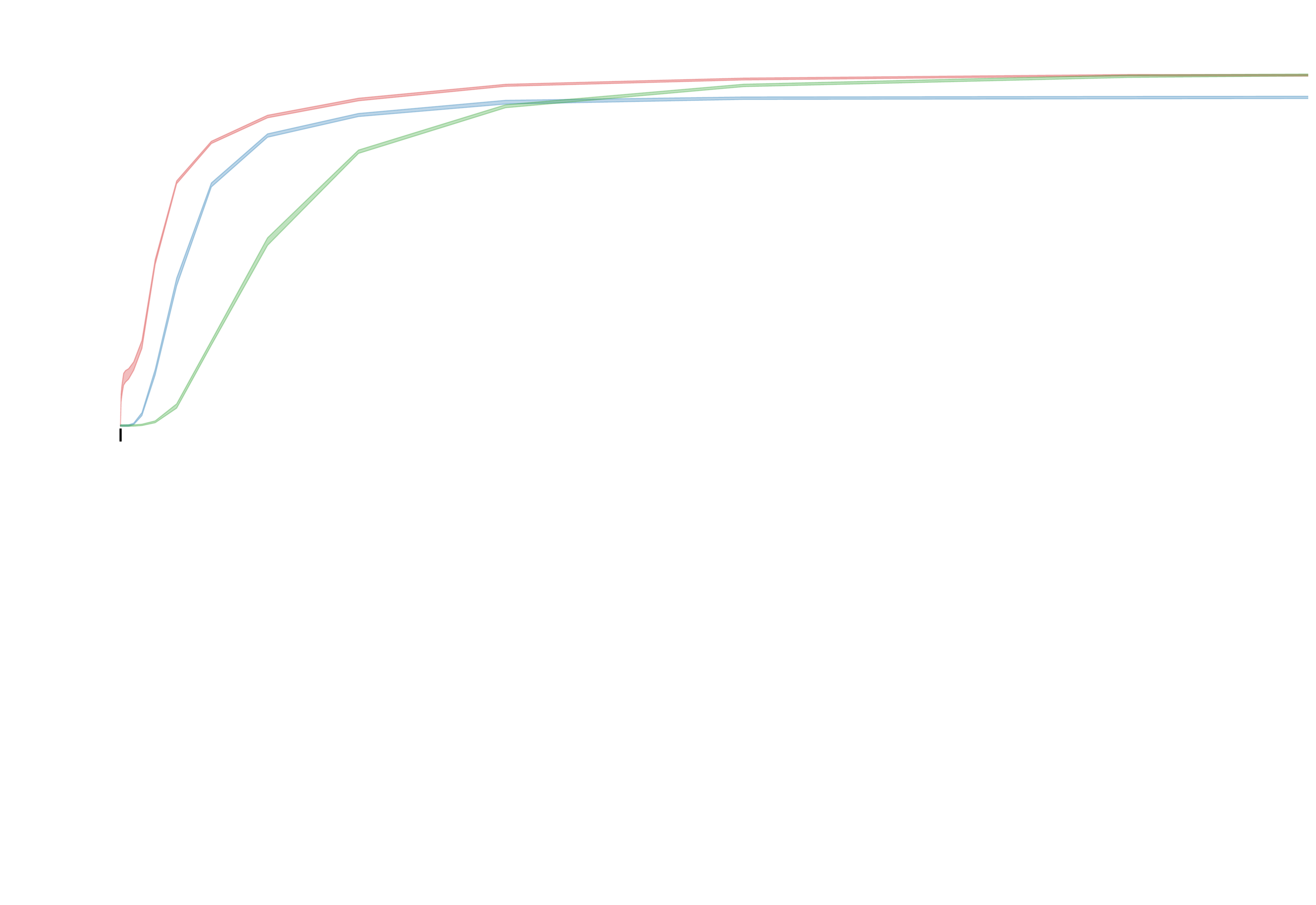
        \caption{\textsc{VGG-19}, $c=5\times10^{-4}$}
        \label{fig:optim_vgg_radam}
    \end{subfigure}
    \caption{Comparison between FAST and the baselines in their learning progress when RAdam is used as the optimization algorithm.}
    \label{fig:optim_radam}
\end{figure}

\begin{figure}[p]
\centering       
    \begin{subfigure}{1.0\textwidth}
        \centering
        \def\svgwidth{14cm}
        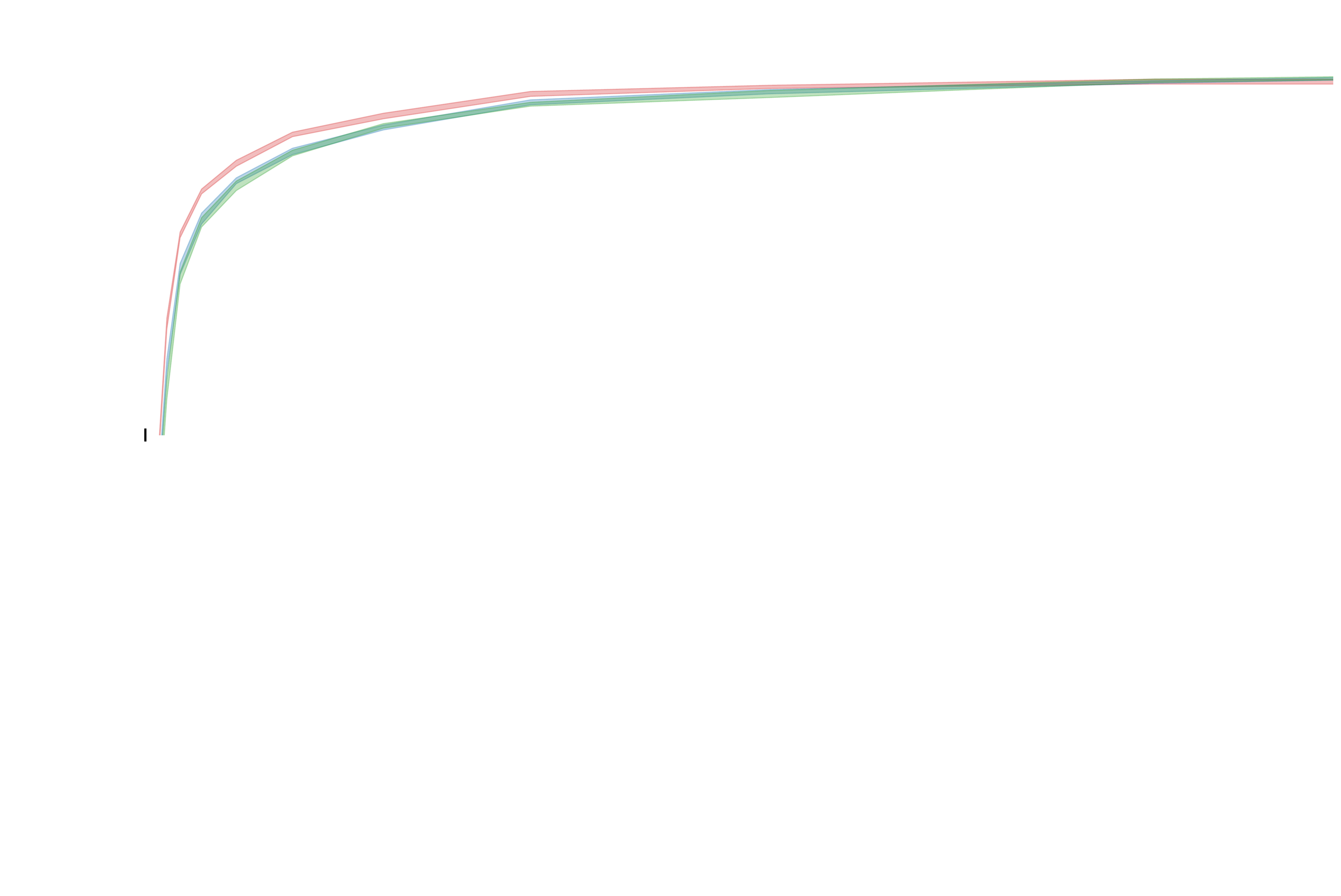
        \caption{\textsc{Net-201}}
        \label{fig:optim_dense_fast}
    \end{subfigure}
    \begin{subfigure}{1.0\textwidth}
        \centering
        \def\svgwidth{14cm}
        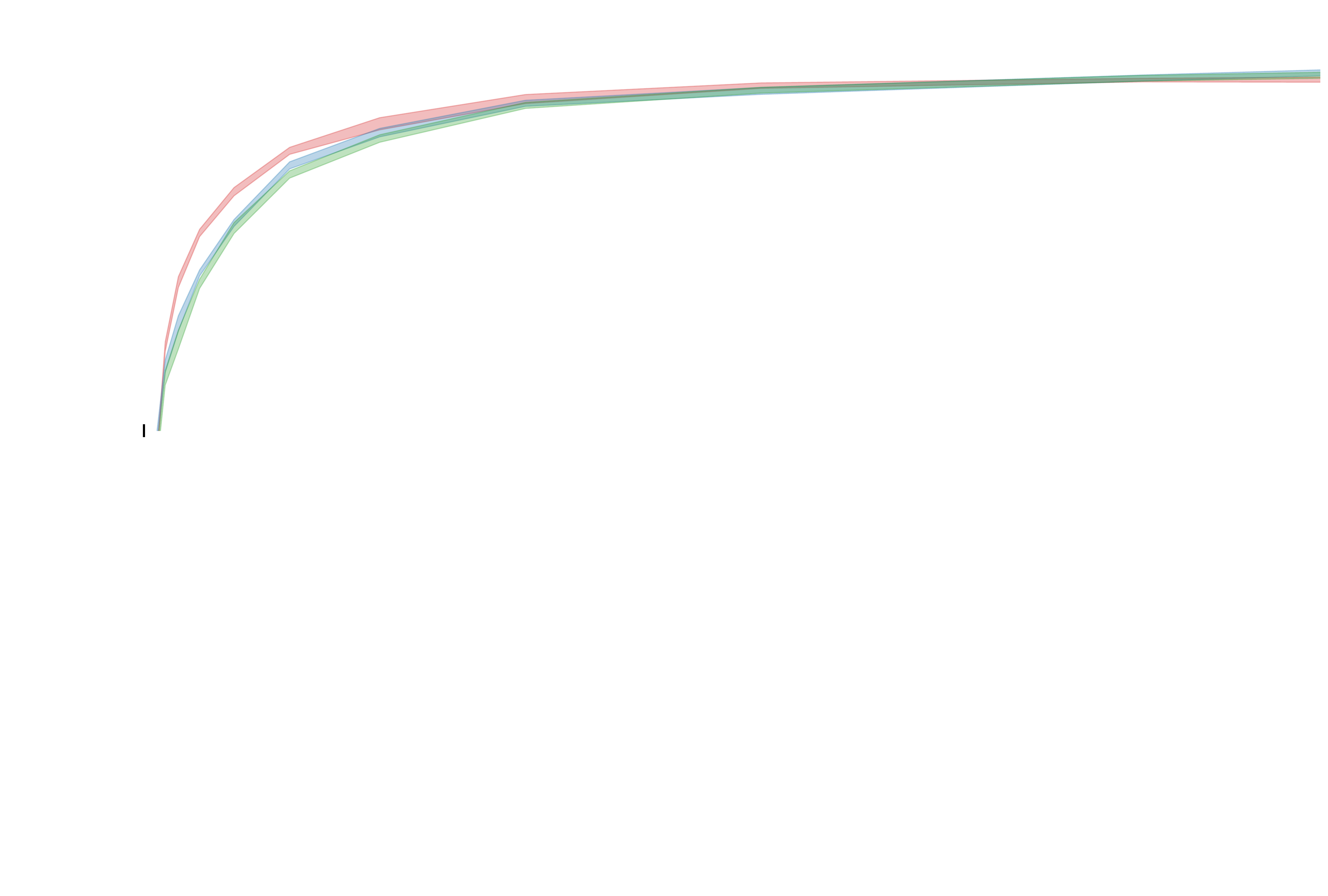
        \caption{\textsc{VGG-19}}
        \label{fig:optim_vgg_fast}
    \end{subfigure}
    \caption{Comparison between the learning progresses corresponding to different optimization algorithms used with FAST.}
    \label{fig:optim_fast}
\end{figure}

\subsection{Convergence Accuracy}
In this experiment we compare FAST and the baselines with a focus on the convergence accuracy. The main goal is to confirm that using FAST would not have an adverse effect on the convergence performance; therefore, the models in this experiment are fine-tuned for a longer period than those in the previous experiments. Another goal of this experiment is to quantitatively confirm that fine-tuning with FAST keeps $\vtheta$ closer to $\vtheta_{\gT_s}^*$ compared to when the baselines are employed. This is similar to the experiment conducted in Section~\ref{sec:exp_init} in that they both indicate how much our method and the baselines are resistant to forgetting. However, the one in Section~\ref{sec:exp_init} focuses on the generalizable accuracy while in this experiment the geometric perspective is highlighted. 

\subsubsection{Setup}
For this experiment we employ~\textsc{ResNeXt-101-32x8d} introduced by~\citet{xie2017aggregated}, which is already pretrained on ImageNet's~\textsc{ILSVRC-12}~\citep{russakovsky2015imagenet}. The fine-tuning is done on~\textsc{Cifar-100}~\citep{krizhevsky2009learning}, 
and~\textsc{Caltech-256}~\citep{griffin2007caltech} classification tasks.~\textsc{Caltech-256} contains 74$\times$74 labeled natural  RGB images from 256 classes. It is originally published with no train-test split; therefore, we made a random but stratified split with 20\% of the images as test (the split could be found in our code). In the training split, 5 images are randomly selected from each class to form the validation set. This quantity is aligned with the size of validation set we chose for the other data sets (5\% of the whole training) employed in this paper. 
The images in~\textsc{Caltech-256} data set are resized to twice of their original size in both their width and their height. 
Fine-tuning on~\textsc{CIFAR-100} and~\textsc{Caltech-256} continued for 250 and 330 epochs respectively. In all the cases, the batch size is set equal to the number of classes (that is $M=N$), and the training batches are sampled in an stratified order.

\subsubsection{Results and discussion}
Figures~\ref{fig:c100} and~\ref{fig:cal256} compare the top-1 ADP obtained from FAST and the baselines. The wide plots on the left of each Figure show the ADP of the first SGD steps while the narrow one on the right side magnify the same quantity but for a larger number of steps taken afterward. Neither the magnitudes nor the ratios of the axes are preserved among the plots in each Figure. The confidence interval is shown with transparent shadows but are barely visible since there is a high consistency among the results from different random seeds. The convergence results are also restated in numbers in Table~\ref{tab:converged}.~\textbf{The ADP results suggest that not only FAST learns faster than the baselines but it also helps reaching higher convergence performance.} The latter is not a very strong argument in the sense that the training time is subjective and the point of convergence has no formal meaning in this context---i.e., one may claim that after an infinite number of training steps the difference between ADP results may become insignificant~\citep{He_2019_ICCV}.

Assume that after the chosen number of training epochs, the convergence is achieved (in its liberal definition: no improvement for a fairly large number of epochs).
In the last column of Table \ref{tab:converged} we present the euclidean distance travelled by the feature extractor's parameters from aiming minimum of the source task's loss to that of the target task's loss. The results confirm that compared to the baselines, FAST preserves $\vtheta_{\gT_t}^*$ in a closer proximity to $\vtheta_{\gT_s}^*$. This implies that~\textbf{FAST reaches the objective of the target task from a direction that is more aligned with the objective of the source task compared to the baselines.} This characteristic makes FAST more resistant to catastrophic forgetting.

\begin{figure}[t]
    \centering
    \def\svgwidth{15cm}
    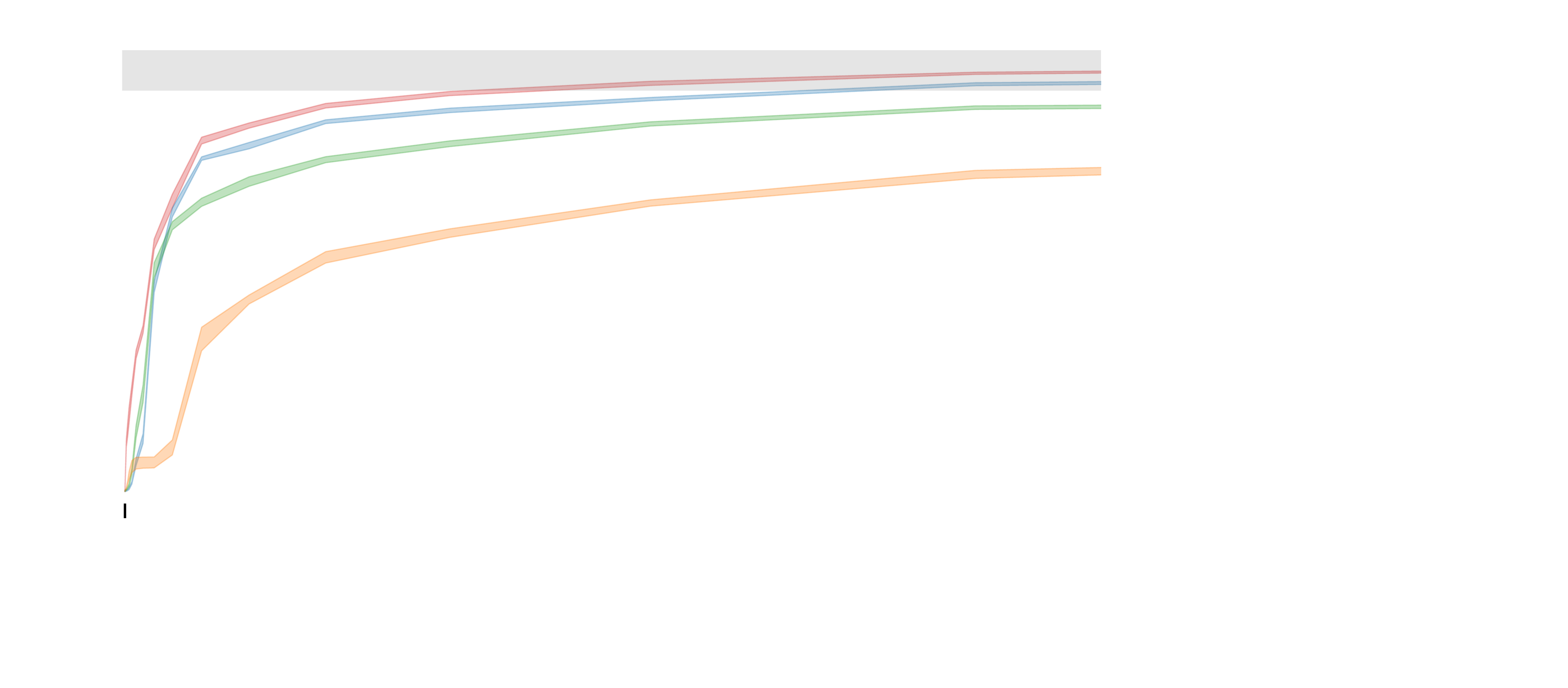
    \caption{Comparing the convergence ADP of FAST and the baselines on \textsc{CIFAR-100} data set.}
    \label{fig:c100}
\end{figure}

\begin{figure}[t]
    \centering
    \def\svgwidth{15cm}
    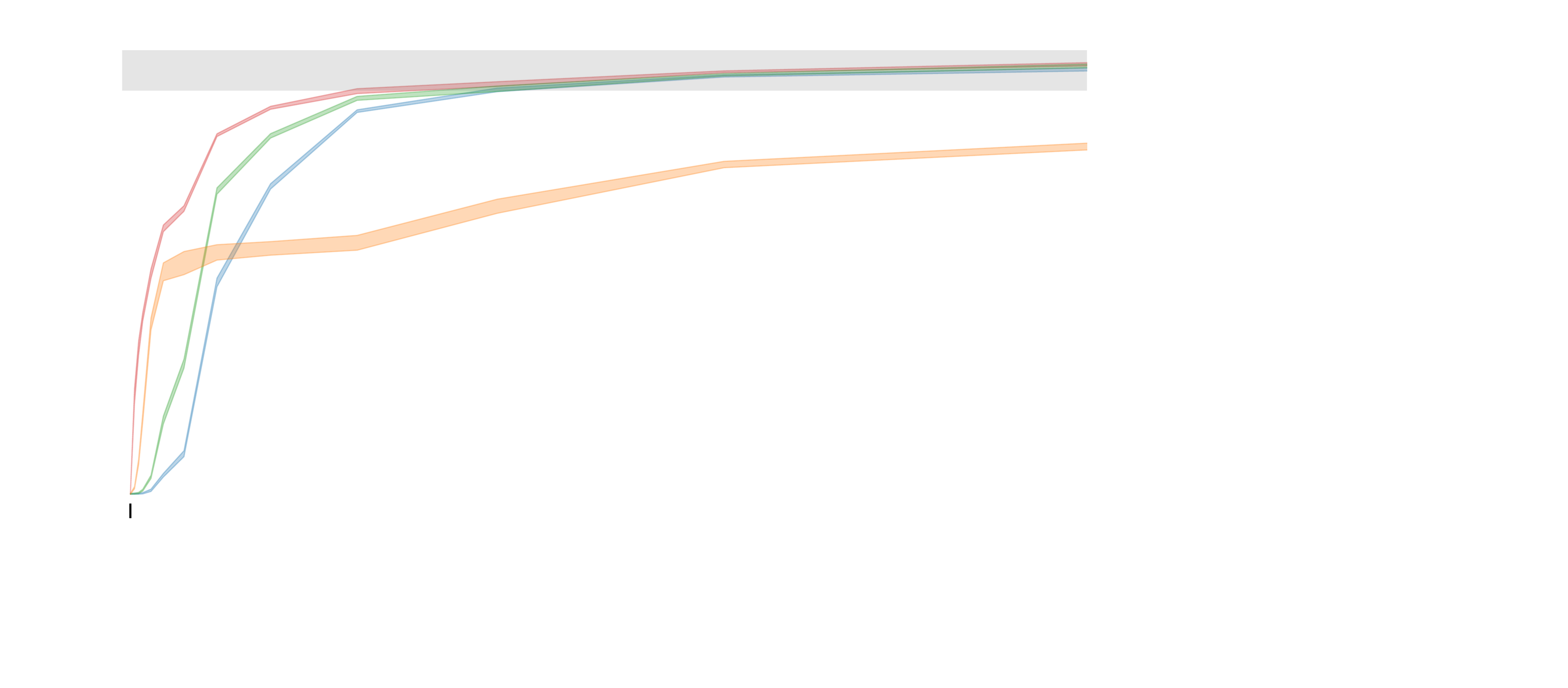
    \caption{Comparing the convergence ADP of FAST and the baselines on \textsc{Caltech-256} data set.}
    \label{fig:cal256}
\end{figure}

\begin{table}[t]
    \centering
    \begin{tabular}{l l c  c  c}
    \toprule
        data set & method & top-1 error & top-5 error & ${||\vtheta_{\gT_t}^* - \vtheta_{\gT_s}^*||}_2$\\ \midrule
        \multirow{3}{7em}{CIFAR-100}    
         & baseline ($\alpha=5*10^{-2}$) & 22.64 $\pm$ 0.26 & 5.91  $\pm$ 0.14 & 119.51 $\pm$ 1.38 \\
         & baseline ($\alpha=10^{-2}$) & 16.80 $\pm$ 0.12 & 3.46 $\pm$ 0.06 & 34.58 $\pm$ 0.43 \\ 
         & baseline ($\alpha=5*10^{-3}$) & 13.90  $\pm$ 0.13 & 2.22 $\pm$ 0.08 & 14.32 $\pm$ 0.12 \\ 
         & FAST & \textbf{12.63} $\pm$ \textbf{0.09} & \textbf{1.62} $\pm$ \textbf{0.04} & \textbf{7.19} $\pm$ \textbf{0.04} \\
        \midrule[\heavyrulewidth]
        \multirow{3}{7em}{\textsc{Caltech-256}}
         & baseline ($\alpha=5*10^{-2}$) & 18.45 $\pm$ 0.14 & 7.14 $\pm$ 0.01 & 37.2 $\pm$ 0.23 
         \\
         & baseline ($\alpha=10^{-2}$) & 13.55 $\pm$ 0.19 & 4.38 $\pm$ 0.09 & 7.70 $\pm$ 0.07 
         \\
         & baseline ($\alpha=5*10^{-3}$) & 14.39 $\pm$ 0.14 & 4.73 $\pm$ 0.10 & 5.93 $\pm$ 0.17
         \\
         & FAST & \textbf{12.81} $\pm$ \textbf{0.10} & \textbf{3.96} $\pm$ \textbf{0.08} & \textbf{5.28} $\pm$ \textbf{0.06} \\
        \bottomrule
        
    \end{tabular}
    \caption{Comparing FAST with baselines in their convergence performance and traveled distance in $\gS(\vtheta)$ from $\vtheta_{\gT_s}^*$. In all baselines, the classifier is initialized using the fan-in mode of Kaiming method~\citep{He_2015_ICCV} and $\alpha=\beta$.}
    \label{tab:converged}
\end{table}

\section{Conclusion and Future Works}
\label{sec:future}
In this paper we pin-pointed a major problem of traditional fine-tuning that not only leads to catastrophic forgetting, but also slows down the convergence to the target task. To clarify this problem, we provided a novel perspective of the loss landscape which makes it possible to connect the loss landscapes of the source and target tasks. Using this perspective, we compared different transfer-learning techniques and proposed FAST for fine-tuning pretrained neural networks on classification tasks. With a focus on machine vision and deep CNN models, we provided empirical analysis with the following outcomes
\begin{itemize}
    \item Initializing the classifier appended through model adoption with close-to-zero values, not only better preserves the transferred knowledge but also accelerates the training procedure on the target task (Section \ref{sec:exp_init}).
    \item The convergence speed is largely influenced by the randomness of the parameters of the classifier and how quickly they are adapted (Section \ref{sec:exp_fix_space}).
    \item Although including a classifier warmup phase (as suggested by~\citet{li2018learning}) can help the optimization algorithm to take its initial steps in a correct direction, it still is prone to minimum overshooting (Section \ref{sec:exp_wup}).
    \item For an efficient convergence, the learning rates chosen for the pretrained feature-extractor and the appended classifier are better to be tuned separately. Otherwise, if they are chosen equally as in the traditional fine-tuning the anytime performance could be compromised either in the beginning of the fine-tuning or during a large number of optimization steps afterward (Section \ref{sec:exp_lr}).
    \item Compared to the traditional fine-tuning, using FAST 
    remarkably decreases the gap between the performance obtained from SGD with momentum and Adam optimization algorithms.
\end{itemize}
Generally, our proposed method learns the target task faster than the traditional fine-tuning besides preserving more knowledge about the source task. We showed superiority of our method from different analytical and empirical viewpoints, though yet more aspects have remained to explore. For instance, one can investigate if compared to the traditional fine-tuning, FAST converges with a different level of \textit{inductive bias} as contextualized by~\citep{abnar2020transferring}. 

A similar strategy taken in this paper can be applied to improve the ADP of pretrained DNNs employed in tasks such as regression, detection, segmentation, etc. To do so, one needs to first generalize the idea so it can work with other loss functions. This sounds challenging since a key component in our analysis to design FAST is the particular behaviour of the softmax function. Therefore, some modifications may be necessary before taking advantage of our strategy for each particular loss function.
Moreover, although FAST is only applied on image classification tasks in this paper, it has no ties to the type of input data. Therefore, we hypothesize that FAST would show similar level of superiority over the baselines when knowledge is transferred between models that operate on other types of data such as text.
Furthermore, 
due to its level of resistance to forgetting, FAST can have a positive contribution to the performance and efficiency of the state-of-the-art methods for continual learning.
Finally, we consider FAST an appealing choice for improving the efficiency of the compute-intensive machine learning processes such as Automatic Machine Learning (\textit{Auto ML}) and, in particular, neural architecture and hyper-parameter search~\citep{zela2018towards}.

\newpage
\appendix
\section{}
    

\label{appendix:append}

\begin{lemma}
\label{th:yhat_norm_sup}
If vector $\vv$ is the result of feeding vector $\vu \in \R^N$ to a logistic softmax, then $\frac{1}{N}$ is the supremum of ${||\vv||}_2^2$.
\end{lemma}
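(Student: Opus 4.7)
The plan is to recast the claim as a constrained-optimization problem on the open probability simplex $\Delta = \{\vv \in \R^N : v_i > 0,\ \sum_i v_i = 1\}$, since the softmax map surjects onto $\Delta$. Extrema of $\|\vv\|_2^2$ over softmax outputs then coincide with extrema of the smooth function $\vv \mapsto \sum_i v_i^2$ over $\Delta$, and the problem becomes independent of the particular preimage $\vu$.

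First, I would invoke the Cauchy--Schwarz (equivalently, power-mean) inequality $\bigl(\sum_i v_i\bigr)^2 \le N \sum_i v_i^2$, which together with $\sum_i v_i = 1$ yields $\|\vv\|_2^2 \ge 1/N$, with equality iff every $v_i = 1/N$. Since $\vu = \vzero$ (or any constant vector) realizes exactly $\vv = (1/N,\dots,1/N)$, the bound is tight and attained by a valid softmax output. I would cross-check this via the Lagrangian $L = \sum_i v_i^2 - \lambda(\sum_i v_i - 1)$, whose unique interior stationary condition $2 v_i = \lambda$ forces the uniform distribution, confirming $1/N$ is the only candidate extremum in the interior of $\Delta$.

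The hard part will be reconciling the direction of the inequality with the word ``supremum.'' Examining the softmax at, e.g., $\vu = (t,0,\dots,0)$ with $t \to \infty$ gives $\|\vv\|_2^2 \to 1 > 1/N$, so $1/N$ cannot be a \emph{least upper bound} on $\|\vv\|_2^2$ over softmax outputs; the Cauchy--Schwarz argument in fact pins it as the greatest lower bound, attained at the uniform distribution. Under the natural reading consistent with how the lemma is invoked in Section~\ref{sec:method_init} (to argue that $\|\hat{\mY}\|_2^2$ is \emph{minimized} precisely when every predicted distribution is uniform), the intended extremum is the infimum/minimum. My proposal therefore presents the Cauchy--Schwarz / Lagrange argument as establishing $1/N$ as the tight attained extremal value of $\|\vv\|_2^2$ at $\vv = (1/N,\dots,1/N)$, and flags that ``supremum'' in the lemma statement appears to be a typographical slip for ``infimum''---since, taken literally as a least upper bound, the claim is contradicted by the one-hot limit exhibited above.
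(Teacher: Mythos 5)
Your argument is correct, and it differs from the paper's own proof of this lemma in a useful way. The paper differentiates $\|\vv\|_2^2 = \bigl(\sum_n e^{2\vu_n}\bigr)/\bigl(\sum_{n'} e^{\vu_{n'}}\bigr)^2$ directly with respect to the logits $\vu_i$, sets the gradient to zero, and deduces that the only stationary configuration is the one with all logits equal (hence $\vv_n = 1/N$ and $\|\vv\|_2^2 = 1/N$); it never classifies that stationary point as a minimum or maximum. Your route instead quotients out the logits entirely, works on the probability simplex, and gets the sharp bound $\|\vv\|_2^2 \ge 1/N$ from Cauchy--Schwarz with an explicit equality condition --- which is in fact exactly the argument the paper reserves for its more general companion Lemma~\ref{th:yhat_norm_sm}. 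What your approach buys is (i) a one-line proof that subsumes the softmax-specific lemma as a special case, (ii) a verdict on the \emph{type} of extremum, which the paper's first-order computation leaves open, and (iii) the observation, via the one-hot limit $\vu = (t,0,\dots,0)$, $t \to \infty$, that $1/N$ is the infimum rather than the supremum of $\|\vv\|_2^2$. That last point is a genuine defect in the lemma as stated (and in Lemma~\ref{th:yhat_norm_sm}, which proves a lower bound while calling it a supremum); your reading is confirmed by how the result is invoked in Section~\ref{sec:method_init}, where the paper needs the average $\|\hat{\vy}\|_2^2$ to be \emph{minimized} at the maximum-entropy prediction. You were right to flag the wording rather than try to prove the literal claim, which is false.
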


\begin{proof}
Using the definition of logistic softmax (\EqrefNP{eq:sm}), we have
\begin{equation*}
{||\vv||}_2^2 = \sum_{n=1}^{N}  \frac{ e^{2\vu_n}}{(\sum_{n'=1}^{N} e^{\vu_{n'}})^2} = \frac{\sum_{n=1}^{N} e^{2\vu_n}}{(\sum_{n'=1}^{N} e^{\vu_{n'}})^2}.
\end{equation*}
Taking the partial derivative of ${||\vv||}_2^2$ with respect to the $i$-th element of the $\vu$ gives
\begin{equation*}
\begin{split}
    &\frac{\partial {||\vv||}_2^2}{\partial \vu_i} =\\ &\frac{2e^{2\vu_i} (\sum_{n'=1}^{N} e^{\vu_{n'}})^2 - 2e^{\vu_{i}} (\sum_{n'=1}^{N} e^{\vu_{n'}}) (\sum_{n=1}^{N} e^{2\vu_n})}{(\sum_{n'=1}^{N} e^{\vu_{n'}})^4}
\end{split}
\end{equation*}
which could be simplified to
\begin{equation*}
\begin{split}
    \frac{\partial {||\vv||}_2^2}{\partial \vu_i} =\frac{2e^{2\vu_i} \sum_{n'=1}^{N} e^{\vu_{n'}} - 2e^{\vu_{i}} (\sum_{n=1}^{N} e^{2\vu_n})}{(\sum_{n'=1}^{N} e^{\vu_{n'}})^3}.
\end{split}
\end{equation*}
Setting the calculated partial derivative to zero yields the value of $\vu_i$ for which ${||\vv||}_2^2$ is maximized in that direction. That is if
\begin{equation*}
    \frac{\partial {||\vv||}_2^2}{\partial \vu_i} = 0
\end{equation*}
then,
\begin{equation}
\label{eq:equal_sums}
    \sum_{n'=1}^{N} e^{\vu_{i}} e^{\vu_{n'}} =\sum_{n'=1}^{N} e^{2\vu_{i}}.
\end{equation}
re-indexing the sum on the left side of \Eqref{eq:equal_sums} by $n$, results in
\begin{equation*}
    \forall \ n \in \R^N: \vu_n = \frac{1}{N}
\end{equation*}

\end{proof}

\begin{lemma}
\label{th:yhat_norm_sm}
If elements of a vector $\vv \in \R^N$ sum to 1, then $\frac{1}{N}$ is the supremum of ${||\vv||}_2^2$.
\end{lemma}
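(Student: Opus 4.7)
The plan is to recognize that the statement as literally worded cannot hold for unrestricted $\vv \in \R^N$: the constraint $\sum_n \vv_n = 1$ leaves an affine hyperplane on which $\|\vv\|_2^2$ is unbounded above (e.g.\ $\vv = (1+t, -t, 0, \dots, 0)$ gives arbitrarily large norm), so $\frac{1}{N}$ cannot be a supremum. Reading the lemma together with how it is invoked in the main text (it is combined with Axiom \ref{th:sum_max} to conclude that the average $\|\hat{\mY}_m\|_2^2$ is \emph{minimized} when entropy is maximized), the intended content is that $\frac{1}{N}$ is the infimum (in fact the minimum) of $\|\vv\|_2^2$ on $\{\vv : \vone_N^\top\vv = 1\}$, attained uniquely at $\vv = \frac{1}{N}\vone_N$. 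I would prove exactly this and flag the terminology in a remark.

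First I would obtain the bound $\|\vv\|_2^2 \ge \frac{1}{N}$ in one line by Cauchy--Schwarz applied to $\vv$ and $\vone_N$:
\begin{equation*}
    1 \;=\; \Bigl(\sum_{n=1}^{N} \vv_n\Bigr)^{2} \;=\; \langle \vv, \vone_N\rangle^{2} \;\le\; \|\vv\|_2^{2}\,\|\vone_N\|_2^{2} \;=\; N\,\|\vv\|_2^{2}.
\end{equation*}
Rearranging gives $\|\vv\|_2^2 \ge \frac{1}{N}$. Equality in Cauchy--Schwarz forces $\vv$ to be a scalar multiple of $\vone_N$; the constraint $\sum_n \vv_n = 1$ then fixes that scalar to $\frac{1}{N}$, yielding $\vv = \frac{1}{N}\vone_N$ and $\|\vv\|_2^2 = \frac{1}{N}$ at the extremum.

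As a cross-check, and to match the calculus style used in the proof of Lemma \ref{th:yhat_norm_sup}, I would sketch the Lagrange multiplier argument: the Lagrangian $L(\vv,\lambda) = \sum_n \vv_n^2 - \lambda(\sum_n \vv_n - 1)$ has stationarity condition $2\vv_n = \lambda$ for every $n$, so every coordinate of $\vv$ is equal; the constraint then forces $\vv_n = \frac{1}{N}$ with objective value $\frac{1}{N}$. Strict convexity of $\|\cdot\|_2^2$ and closedness of the affine constraint set promote this critical point to the unique global minimizer.

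The ``hard part'' here is therefore not analytic but interpretive: the one-line Cauchy--Schwarz argument settles the mathematics instantly, and the real work is simply to reconcile the wording. I would state the conclusion cleanly as $\min\{\|\vv\|_2^2 : \vv\in\R^N,\ \vone_N^\top\vv = 1\} = \frac{1}{N}$, achieved iff $\vv = \frac{1}{N}\vone_N$, which is exactly the inequality the subsequent variance-reduction argument in Section \ref{sec:method_init} needs.
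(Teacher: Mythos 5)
Your proof is correct and uses exactly the same argument as the paper: Cauchy--Schwarz applied to $\vv$ and $\vone_N$, giving $\|\vv\|_2^2 \ge \frac{1}{N}$. Your observation that the lemma's word ``supremum'' should be ``infimum'' (or minimum) is also right --- the paper's own proof derives the lower bound $\|\vv\|_2^2 \ge \frac{1}{N}$, and the surrounding text in Section~\ref{sec:method_init} indeed uses the lemma to argue that the norm of the predicted labels is \emph{minimized} at maximum entropy, so the mislabeling is in the paper's statement, not in your reading of it.
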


\begin{proof}
Using Cauchy-Schwarz inequality,
\begin{equation*}
\left(\vv\  \vone_N    \right)^2 \leq \left(\vone_N^{T}\  \vone_N \right)
\left(\vv\ \vv^{T}\right)\ ,
\end{equation*}
and since $\vv\  \vone_N=1$,
\begin{equation*}
    \vv\ \vv^{T} = {||\vv||}_2^2 \geq \frac{1}{N}.
\end{equation*}
\end{proof}

\begin{axiom}
\label{th:sum_max}
Sum of a set of all non-negative random variables is maximized when each of the entries meet its maximum.
\end{axiom}

\bibliography{egbib}

\end{document}